\PassOptionsToPackage{table,xcdraw,dvipsnames}{xcolor}

\documentclass{article}

\usepackage{iclr2026_conference,times}
\iclrfinalcopy

\usepackage[utf8]{inputenc} 
\usepackage[T1]{fontenc}    
\usepackage{hyperref}       
\usepackage{url}
\hypersetup{
    colorlinks=true,
    linkcolor=BlueViolet,
    citecolor=teal
}
\usepackage[most]{tcolorbox}
\usepackage{url}            
\usepackage{booktabs}       
\usepackage{amsfonts}       
\usepackage{nicefrac}       
\usepackage{microtype}      
\usepackage{xcolor}         

\usepackage{makecell}
\usepackage{amsthm}
\usepackage{bm}
\usepackage[table,xcdraw]{xcolor}
\usepackage{multicol}
\usepackage{colortbl}
\usepackage{amsmath}
\usepackage{cleveref}
\usepackage{multirow}
\usepackage{graphicx}
\usepackage{subcaption}
\usepackage{enumitem}
\usepackage{bbding}
\usepackage{color}

\usepackage{algorithm}
\usepackage{listings}
\usepackage{natbib}

\usepackage{mdframed}
\usepackage{amsmath,bm,bbm}
\usepackage{footnote}
\usepackage{adjustbox}
\usepackage{booktabs}
\usepackage{multirow}
\usepackage{graphicx}
\usepackage{caption}
\usepackage{adjustbox}

\usepackage{algorithm}
\usepackage{algpseudocode}
\definecolor{baselinecolor}{gray}{.9}

\usepackage{etoolbox}
\makeatletter
\def\blfootnote{\gdef\@thefnmark{}\@footnotetext}

\AfterEndEnvironment{algorithm}{\let\@algcomment\relax}
\AtEndEnvironment{algorithm}{\kern2pt\hrule\relax\vskip3pt\@algcomment}
\let\@algcomment\relax
\newcommand\algcomment[1]{\def\@algcomment{\footnotesize#1}}
\renewcommand\fs@ruled{\def\@fs@cfont{\bfseries}\let\@fs@capt\floatc@ruled
  \def\@fs@pre{\hrule height.8pt depth0pt \kern2pt}%
  \def\@fs@post{}%
  \def\@fs@mid{\kern2pt\hrule\kern2pt}%
  \let\@fs@iftopcapt\iftrue}
\makeatother

\definecolor{tabhighlight}{HTML}{e5e5e5}
\definecolor{tabhighlight2}{HTML}{e8e8e8}

\newcommand{\para}[1]{
  \noindent\textbf{#1}
}
\newcommand{\name}{SparseEval}

\author{
Taolin Zhang$^{1, 2}$, 
Hang Guo$^{2}$, 
Wang Lu$^{3\dagger}$, 
Tao Dai$^{1\dagger}$, 
Shu-Tao Xia$^{2}$, 
Jindong Wang$^{4}$ \\
$^1$College of Computer Science and Software Engineering, Shenzhen University\\ $^2$Tsinghua Shenzhen International Graduate School, Tsinghua University\\ $^3$Independent, $^4$Microsoft Research Asia \\
\{newlw230630, daitao.edu\}@gmail.com
}

\definecolor{ForestGreen}{RGB}{34,139,34}

\renewcommand{\paragraph}[1]{\medskip\noindent\textbf{#1.~}}

\theoremstyle{plain}
\newmdtheoremenv{corollary}{Corollary}

\newtheorem{prop}{Proposition}

\newmdtheoremenv[linewidth=0pt,innerleftmargin=4pt,innerrightmargin=4pt]{lemma}{Lemma}%

\usepackage{tikz}

\newcommand*{\circled}[1]{\lower.7ex\hbox{\tikz\draw (0pt, 0pt)%
    circle (.5em) node {\makebox[1em][c]{\small #1}};}}

\title{SparseEval: Efficient Evaluation of Large Language Models by Sparse Optimization}

\begin{document}

\maketitle
\blfootnote{$^{ \dagger}$Corresponding author.}
\begin{abstract}
As large language models (LLMs) continue to scale up, their performance on various downstream tasks has significantly improved. However, evaluating their capabilities has become increasingly expensive, as performing inference on a large number of benchmark samples incurs high computational costs.
In this paper, we revisit the model-item performance matrix and show that it exhibits sparsity, that representative items can be selected as anchors, and that the task of efficient benchmarking can be formulated as a sparse optimization problem. 
Based on these insights, we propose SparseEval, a method that, for the first time, adopts gradient descent to optimize anchor weights and employs an iterative refinement strategy for anchor selection. We utilize the representation capacity of MLP to handle sparse optimization and propose the Anchor Importance Score and Candidate Importance Score to evaluate the value of each item for task-aware refinement. Extensive experiments demonstrate the low estimation error and high Kendall’s~$\tau$ of our method across a variety of benchmarks, showcasing its superior robustness and practicality in real-world scenarios. Code is available at {\url{https://github.com/taolinzhang/SparseEval}}.
\end{abstract}
\section{Introduction}

In recent years, the capabilities of large language models~(LLMs) have improved dramatically as model scales have grown \citep{gpt3,gpt4,qwen,qwen2,qwen3,guo2025deepseek}. From early small- and medium-sized models to today’s hundred-billion-parameter giants, LLMs have achieved remarkable performance in natural language understanding, reasoning, and generation tasks. However, this performance gain comes along with the  increased inference cost and evaluation overheads. Larger models require significantly more computational resources and time for inference, particularly during evaluation where the cost of running on large-scale datasets may become prohibitively high. This raises a crucial question in both research and practical deployment: how can we reduce inference costs while maintaining evaluation quality?

Previous studies have noted that evaluation datasets often contain a significant amount of redundancy, as demonstrated by visualization techniques such as UMAP \citep{mcinnes2018umap} and t-SNE \citep{maaten2008visualizing}. These approaches analyze similarities between samples based on input prompts or probability distributions, revealing underlying patterns in the data. However, such methods typically rely on prompts or predicted probabilities to detect redundancy, which requires additional resources that may be costly or difficult to acquire. Building on these observations, recent work employs Item Response Theory to reduce the number of evaluation samples \citep{polo2024tinybenchmarks} , while others utilize adaptive clustering strategies for data selection \citep{yuan2025beyond, wang2025effieval}. 

Building such an efficient evaluation benchmark necessitates addressing three core challenges. First is \textbf{sparsity existence}: how can we intuitively and quantitatively demonstrate the presence of sparsity in evaluation tasks to justify efficient evaluation? Second is \textbf{anchor weighting optimization}: given pre-selected anchors, how can we optimize their weights so that they effectively represent the characteristics of the entire dataset? Third is \textbf{anchor selection}: how should the choice of anchors be guided by the optimized weights and downstream tasks to ensure higher task relevance in evaluation results? 
These challenges form the backbone of designing an effective evaluation framework.

In this work, we revisit the \emph{model-item matrix} and use \emph{spectral clustering} to uncover its highly structured and sparse nature. Our analysis reveals that even clustering data with a few anchor points leads to a high degree of intra-cluster similarity and strong inter-cluster predictability. This suggests that the model predictions across samples encode a large amount of transferable information, providing support for efficient evaluation. Therefore, we propose \textbf{\name{}}, a task-aware efficient evaluation for large foundation models by sparse optimization. 
For anchor weight optimization, we propose a training strategy based on gradient descent optimization and dynamically adjusts weights by optimizing reconstruction loss. For anchor selection, we design a task-aware strategy that leverages error-correlation anchor refinement to identify more representative anchor subsets. 
\name{} not only reduces evaluation cost but also enhances alignment between the anchors and downstream task.

Unlike existing methods, our framework capture evaluating sparsity directly from the model-item matrix without relying on additional embeddings. By integrating spectral clustering with gradient-based optimization, our approach accurately predicts model performance using a minimal number of anchor points. This results in a practical and scalable solution that strikes a strong balance between computational efficiency and evaluation accuracy. Our experiments show that compared to traditional full-dataset evaluation, our method can reduce inference costs to only 100 instances while maintaining low estimate errors  and highly consistent performance estimates. Moreover, our framework is highly 
generalizable and can be easily adapted to various evaluation settings and task types, offering a new direction for designing efficient evaluation 
benchmarks in the era of large-scale language models.
\begin{figure*}[!t]
	\centering
    \begin{subfigure}{0.31\linewidth}
        \includegraphics[width=\linewidth]{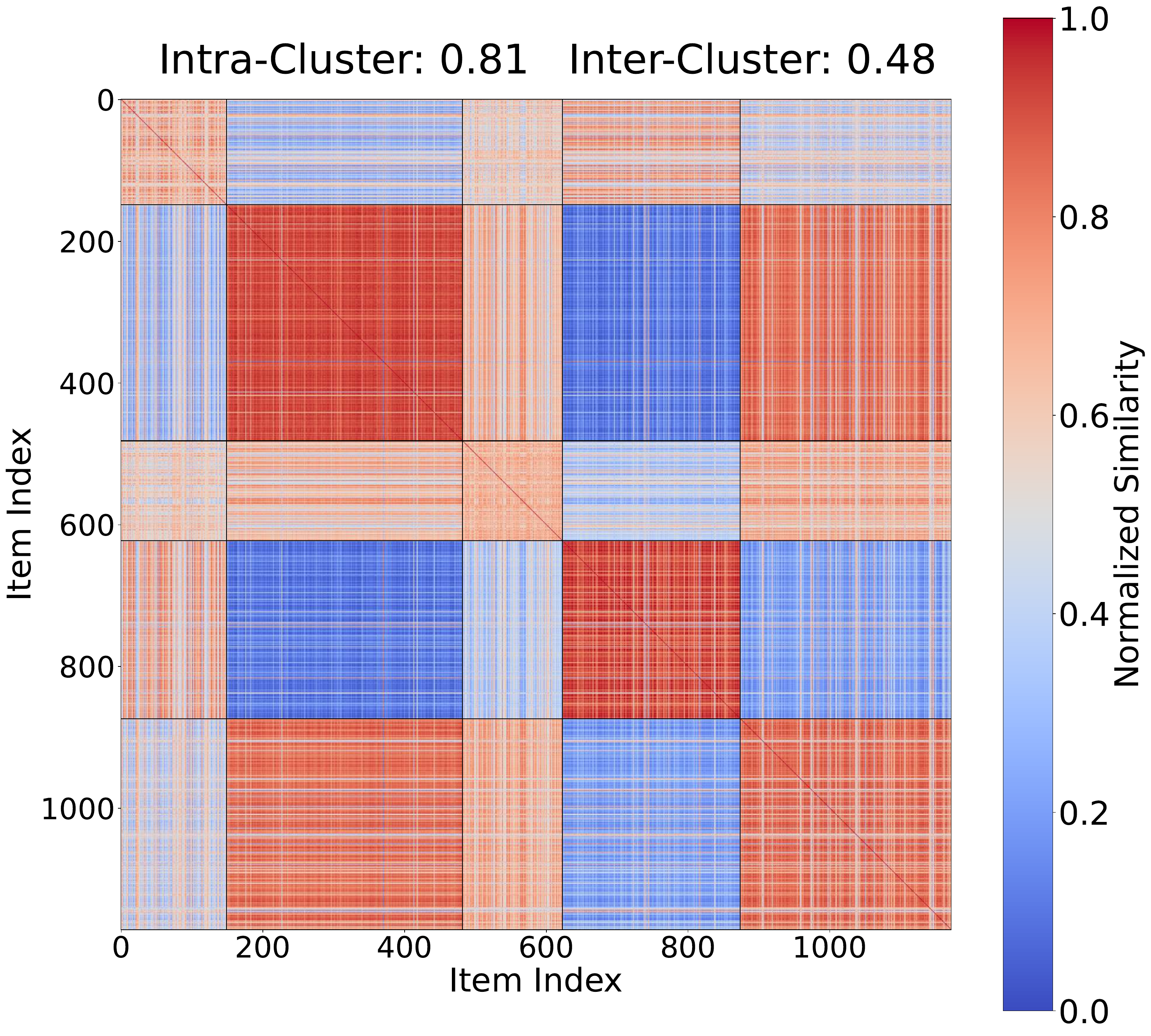}
        \caption{ARC}
    \end{subfigure}
    \hfill
	\begin{subfigure}{0.31\linewidth}
		\includegraphics[width=\linewidth]{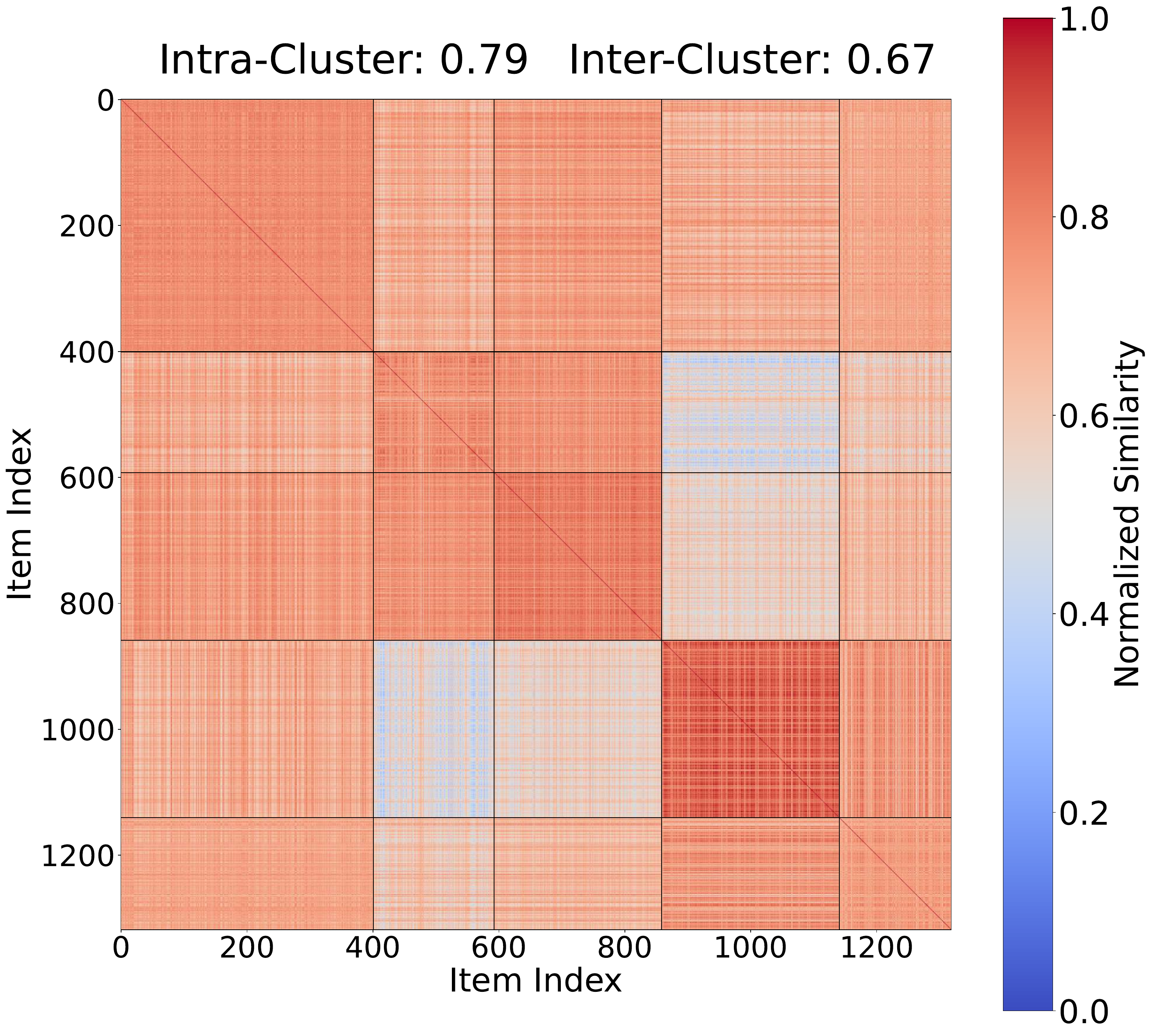}
		\caption{GSM8k}
	\end{subfigure}
    \hfill
    \begin{subfigure}{0.31\linewidth}
		\includegraphics[width=\linewidth]{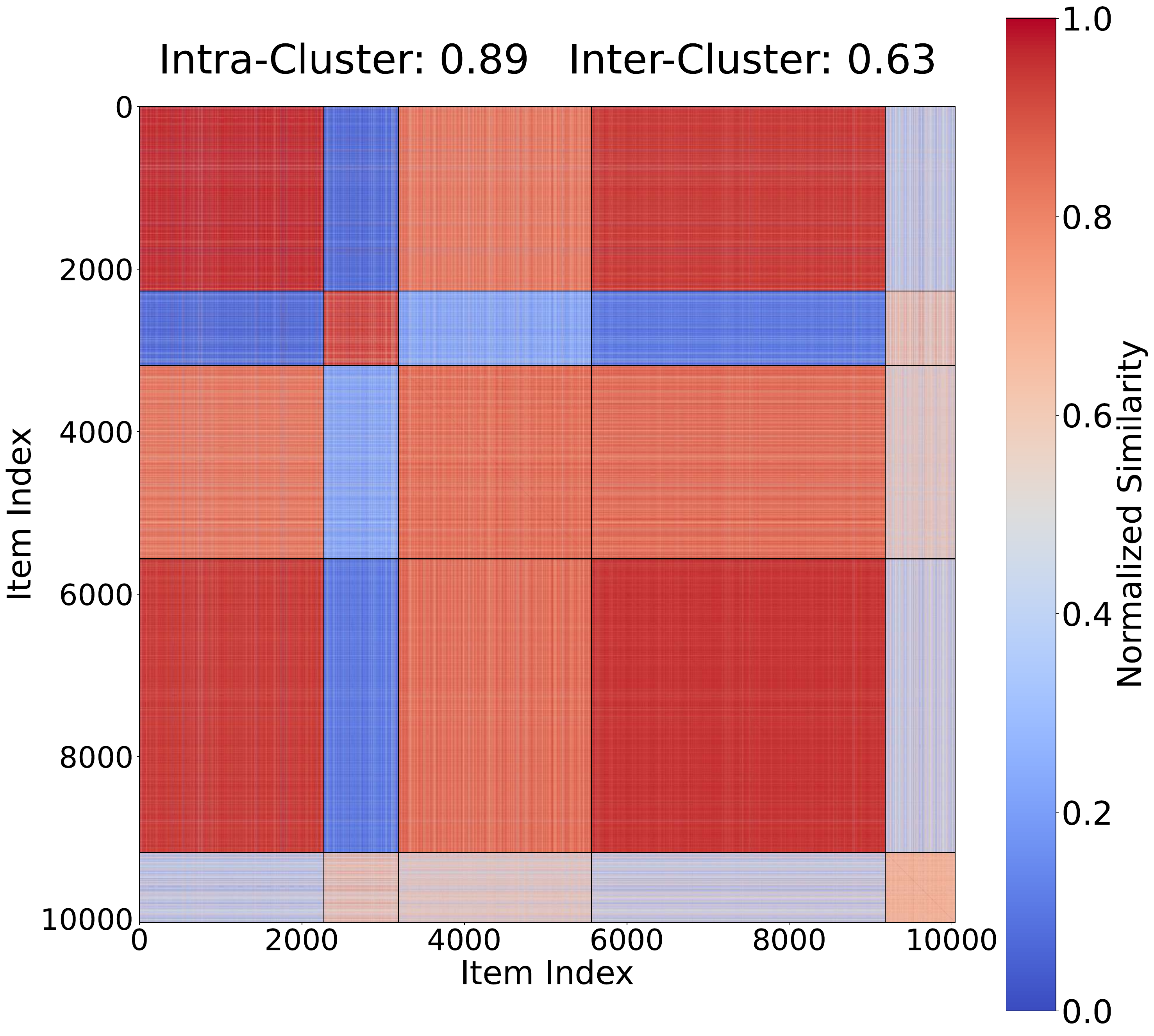}
		\caption{Hellaswag}
	\end{subfigure}
    \hfill
    \\
	\begin{subfigure}{0.31\linewidth}
		\includegraphics[width=\linewidth]{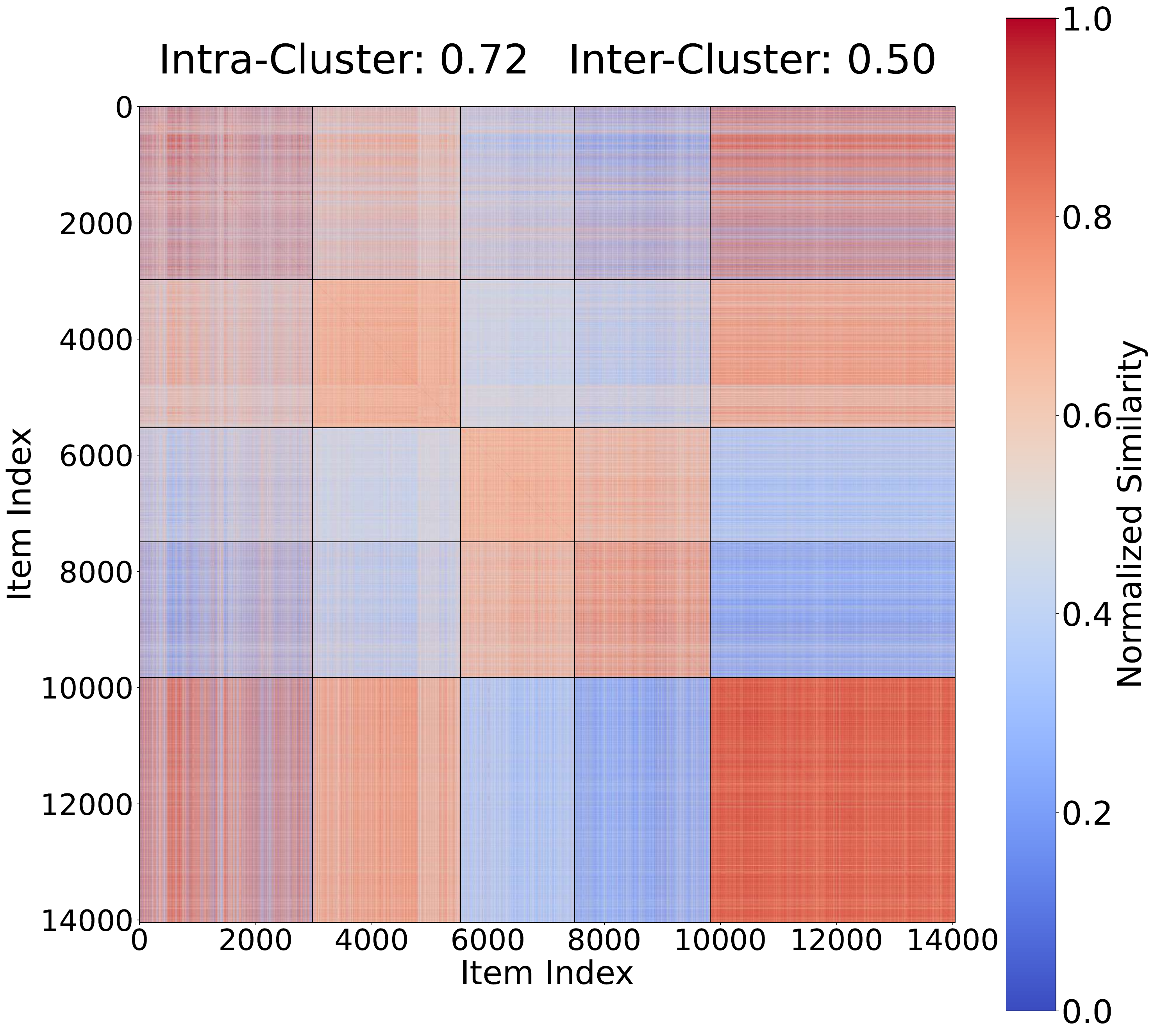}
		\caption{MMLU}
	\end{subfigure}
    \hfill
    \hfill
	\begin{subfigure}{0.31\linewidth}
		\includegraphics[width=\linewidth]{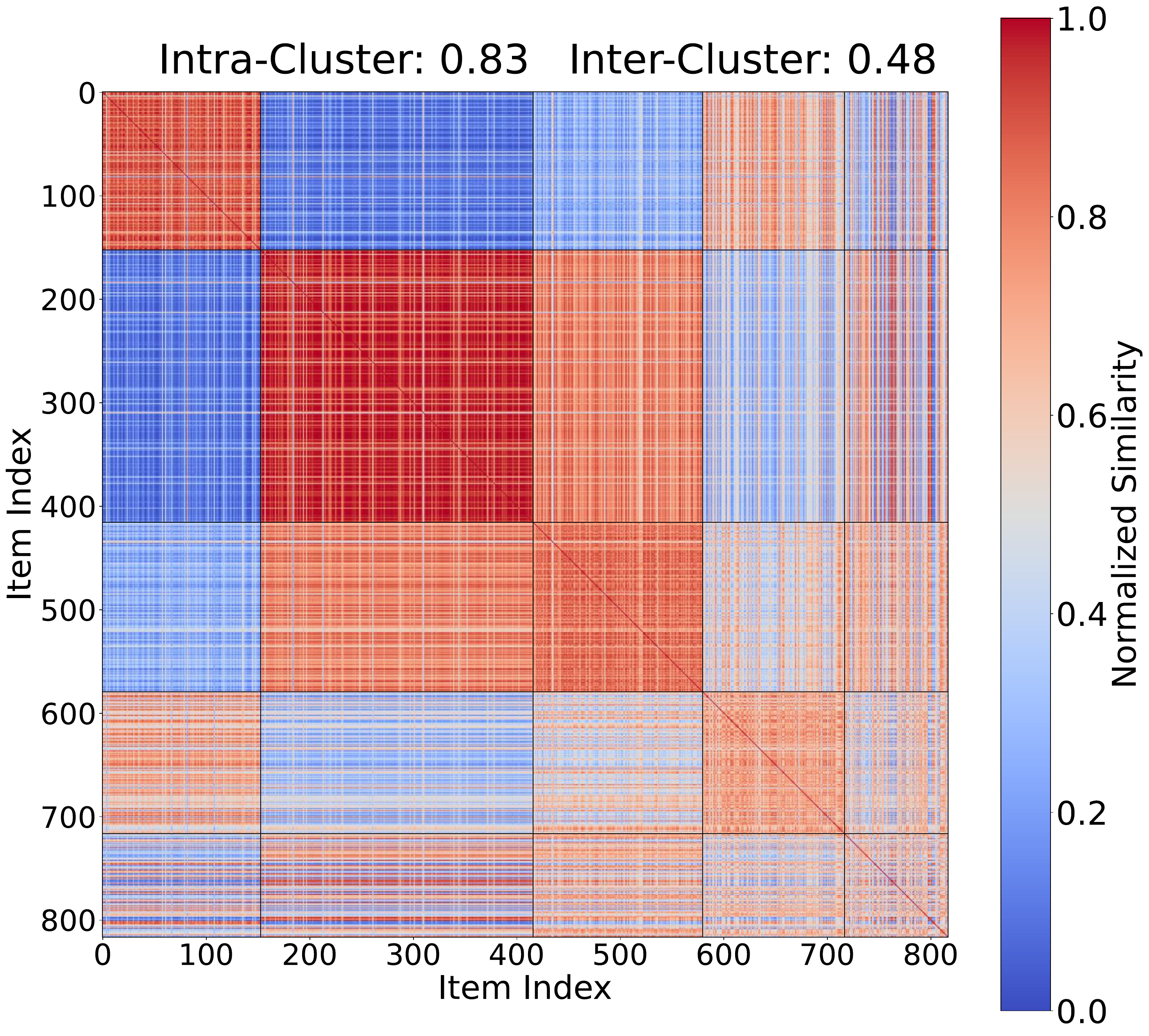}
		\caption{TruthfulQA}
	\end{subfigure}
	\begin{subfigure}{0.31\linewidth}
		\includegraphics[width=\linewidth]{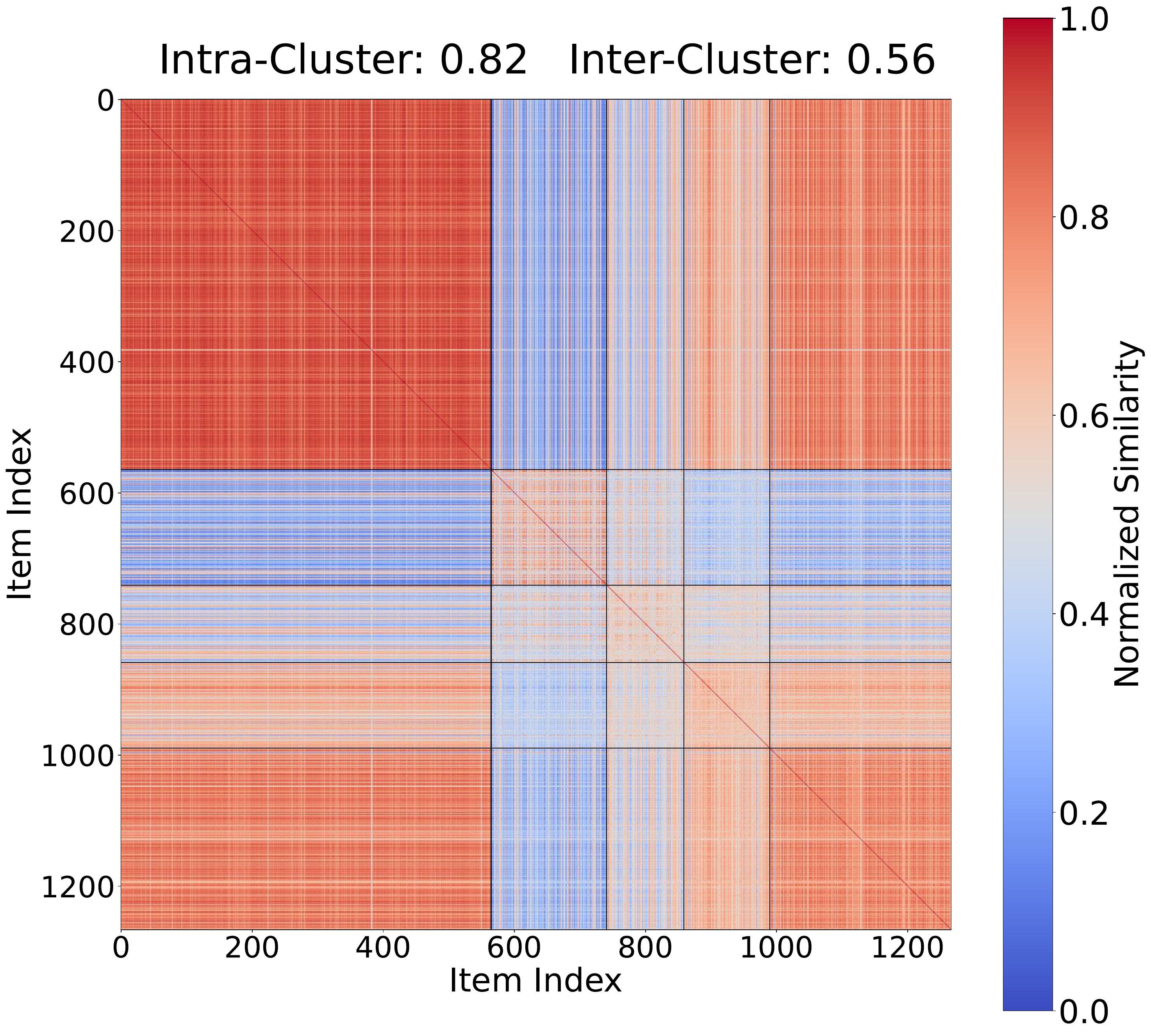}
		\caption{Winogrande}
	\end{subfigure}
	\caption{\textbf{Evidence of Evaluation Sparsity in LLM Benchmarks.} We construct an item-item similarity matrix by computing the cosine similarity between item vectors. The presence of pronounced diagonal blocks along with both high intra- and inter-cluster similarity, suggests the existence of evaluation sparsity and redundancy in the benchmark.}
	\label{fig:cluster}
\end{figure*}

\section{Related Works}

\para{Efficient Data Selection}
Various studies have explored efficient data selection across different settings. 
LIMA~\citep{zhou2023lima} demonstrates that a large language model can be effectively aligned using only 1,000 high-quality examples, as most of the model's knowledge is obtained during pretraining.
LIMR~\citep{li2025limr} argues that in the context of reinforcement learning for LLM training, a small but strategically curated dataset can outperform significantly larger ones.
LIMO~\citep{ye2025limo} further shows that strong mathematical reasoning capabilities can be elicited from a knowledge-rich LLM by fine-tuning it on just 800 high-quality examples, surpassing models trained on datasets more than 100 times larger.
These approaches primarily aim to extract high-quality samples from the original dataset, rather than aligning performance on the original dataset.

\para{Efficient LLM Evaluation}
Evaluating LLMs is often resource intensive due to the large number of models and test items involved. Several methods have been proposed to reduce evaluation costs by selecting a subset of models or items, while closely aligning with the original benchmark in terms of both accuracy and ranking—a field often referred to as Efficient LLM Evaluation. 
Anchor Points \citep{vivek2023anchor} analyzes language models using small and representative subsets of evaluation data, while Flash-HELM~\citep{perlitz2023efficient} adopts a coarse-to-fine strategy that adaptively adjusts the number of evaluation samples for higher-ranking models.
In addition, \citet{pacchiardi2024100} proposes using performance on a small set of reference instances as input features to a general assessor model to predict a new LLM’s performance on unseen instances.
TinyBenchmark~\citep{polo2024tinybenchmarks} employs Item Response Theory (IRT) to guide data selection and 
TailoredBench~\citep{yuan2025beyond} presents another approach by creating a customized, adaptive benchmark tailored to each model. 
Recently, EffiEval~\citep{wang2025effieval} aims to improve evaluation efficiency by selecting a small subset of data that maximizes the coverage of a model’s internal capabilities.





\section{LLM Evaluation can be Sparse}
In this section, we first discuss the evalution sparsity in current benchmarks by revisiting the model-item performance matrix. 
Next, we formally define the problem formulation of efficient evaluation as sparse optimization. 

\subsection{Evaluating Sparsity in LLM Benchmarks}
Given a model-item score matrix $S \in \{-1, 1\}^{m \times n}$, where $m$ and $n$ denote the number of models and items respectively, and where $S_{i,j} = 1$ indicates a correct prediction and $S_{i,j} = -1$ indicates an incorrect one.  
We observe an inherent sparsity structure in this evaluation matrix, which enables more efficient evaluation.  
In particular, we analyze inter-item relationships by examining the column vectors of $S$, where each column represents the response patterns of all models to a specific item.  
By computing the cosine similarity between these item vectors, we construct an item-item similarity matrix $S_{\text{item}} \in \mathbb{R}^{n \times n}$, where each entry reflects the similarity between a pair of test items.

To further explore the structure of the dataset, we apply spectral clustering to $S_{\text{item}}$ and partition the test items into five clusters, as shown in Figure~\ref{fig:cluster}. The resulting clusters reveal strong intra-cluster similarity, as evidenced by the pronounced diagonal blocks in the clustered similarity matrix. This observation indicates that many test items are highly similar to each other in terms of model response patterns, suggesting the presence of redundancy and potential for sparsity in the benchmark. 
Such a clustering structure implies that it may be feasible to select a small subset of representative items (referred to as \textbf{anchors}) that can effectively capture the diversity of the entire dataset. 
Moreover, we observe that there also exists considerable inter-cluster similarity among the items after clustering, indicating that all the item vectors share substantial common information, making mutual prediction possible. 
This reveals the widespread existence of \textbf{Evaluation Sparsity} in LLM benchmarks. By focusing evaluation on these anchors, we can potentially reduce the overall evaluation cost while preserving the discriminative power of the benchmark, which motivates the subsequent development of anchor-based sparse evaluation.

\subsection{Problem Formulation}

The goal is to reduce the evaluation cost of LLM benchmarks by selecting a small subset of informative items, such that performance evaluated on this subset can closely approximate the result on the full dataset. We achieve this by learning a item-level weighting function, and cast this as an sparse optimization problem.

\para{General Formulation}  
Let $S = [s_1, s_2, \dots, s_n] \in \mathbb{R}^{m\times n}$ denote the per-item evaluation scores. With given sparsity $k$, we construct the sparse input $S' =S\odot (\mathbf{1}_m W^\top)$ by introducing weighting factor $W \in \mathbb{R}^{n}$ ($\|W\|_0 \leq k$), where $\odot$ denotes the element-wise product.
Let $f: \mathbb{R}^n \rightarrow \mathbb{R}$ be an aggregation function that maps the sparse input to an overall benchmark score. 
We now formulate the evaluation task as the following optimization problem:

\begin{equation}
\begin{aligned}
\underset{W,\,f}{\text{minimize}} \quad & \left| f(  S\odot (\mathbf{1}_m W^\top)) - SW_a \right|_1 \\
\text{subject to} \quad & \|W\|_0 \leq k
\end{aligned}, 
\label{eq:functional-form}
\end{equation}
where $W_a = \frac{1}{n} \mathbf{1}_n$ is a uniform averaging vector over the full set of $n$ items and $SW_a$ denotes the real performance of the model.
This formulation encourages the learned function $f(S\odot (\mathbf{1}_m W^\top))$ applied to a small subset of items to approximate the full evaluation $W_a^TS$.

\para{Linear Formulation}  
When $f$ is a linear function, it can be absorbed by $W$ and we can derive the objective of all previous methods \citep{polo2024tinybenchmarks,yuan2025beyond}, which directly selects a sparse subset of items whose weighted average matches the full benchmark score.

\begin{equation}
\begin{aligned}
\underset{W}{\text{minimize}} \quad & \left\|SW - SW_a \right\|_1 \\
\text{subject to} \quad & \|W\|_0 \leq k
\end{aligned}.
\label{eq:sparse-linear-form}
\end{equation}

\begin{figure}[!t]
    \centering    \includegraphics[width=\textwidth]{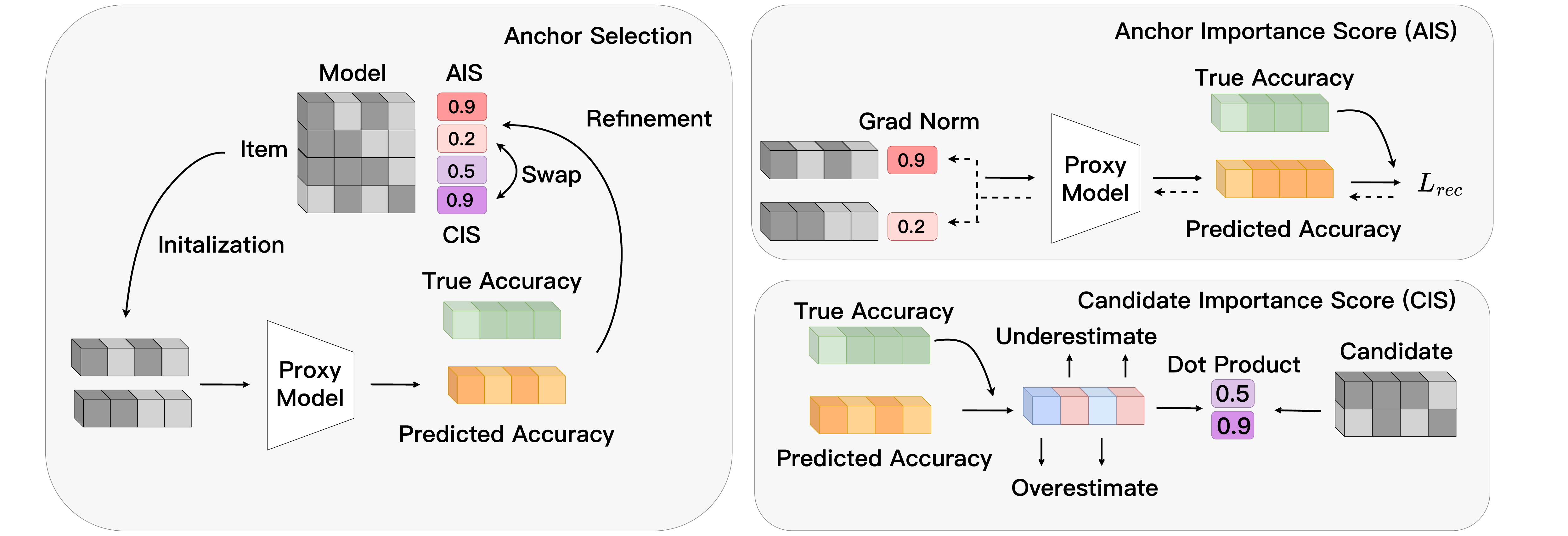}
    \vspace{-5pt}
    \caption{
        \small \textbf{Anchor Refinement in SparseEval.} We leverage a proxy model to perform task-aware anchor refinement. By iteratively replacing items with low Anchor Importance Scores with those having high Candidate Importance Scores, we are able to obtain more representative anchors for efficient evaluation.
        }
        \label{fig:eq}
\end{figure}
\section{Methodology}
\label{sec:method}
In this section, we first describe the anchor weight predictor for pre-selected anchors in an end-to-end optimization. We then discuss the initialization and refinement strategy of anchors in \name{}.

\subsection{Anchor Weight Predictor}
We begin by focusing on a simplified version of the problem: given a fixed set of anchor points, we aim to find an appropriate method for assigning weights to these anchors for perfomance estimation. 
Traditional methods, such as TinyBenchmark~\citep{polo2024tinybenchmarks}, typically train an Item Response Theory (IRT) model and then apply $k$-means clustering to the resulting IRT representations in order to identify anchor points. The weights of these anchors are subsequently assigned based on the distribution of the clusters. Other approaches~\citep{yuan2025beyond} leverage scaling factors to calibrate the performance estimation.
In contrast, by formulating the task as a sparse optimization problem, we observe that the aggregation function $f$ can be approximated using a MLP model for end-to-end optimization. Therefore, it becomes possible to directly optimize the anchor weights via gradient descent by minimizing a reconstruction loss. 

Formally, let $W$ denote the given sparse weight factor ($\left\| W \right \|_0\le k$), $W_a$ the ground-truth uniform averaging weights, and $S_{train}\in \mathbb{R}^{M \times n}$ the training score matrix over $M$ examples and $n$ items. The square loss function for anchor weight optimization is defined as:

\begin{equation}
    \mathcal{L} = \frac{1}{M} \left\| f(S_{train} \odot (\mathbf{1}_M W^\top)) - S_{train}W_a \right\|_2
\label{eq:anchor-f-loss}
\end{equation}






\begin{algorithm}[!t]
\caption{\name{}}
\label{alg:sparse-eval-final}
\begin{algorithmic}[1]
\Require Prediction matrix $S \in \{-1,1\}^{m \times n}$, number of anchors $k$, refinement steps $R$, learning rate $\eta$, proxy training epochs $E$, final training epochs $F$
\State \textbf{Anchor Selection and Refinement:}
\State Initialize anchors $A$ using $k$-means or random sampling based on validation set
\For{$r = 1 \to R$}
    \State Train a proxy MLP $f_r$ on current anchors $A$ with $E$ epochs to predict overall model performance
    \State Compute prediction residuals: $e$ in Eq.~\ref{eq:residuals}  
    \For{each anchor $i \in A$}
        \State Compute Anchor Importance Score $AIS_i$ in Eq.~\ref{eq:AIS} for anchor $i$
    \EndFor
    \For{each candidate $j \notin A$}
        \State Compute Candidate Importance Score $CIS_j$ in Eq.~\ref{eq:CIS} for candidate $j$
    \EndFor
    \State Remove weakest anchor $i^* = \arg\min_i \text{AIS}_i$ from $A$
    \State Add strongest candidate $j^* = \arg\max_j \text{CIS}_j$ to $A$
\EndFor
\State \textbf{Final MLP Training:}
\State Train final MLP model $f$ on selected anchors $A$ with $F$ epochs to predict overall model performance
\State Return final trained MLP model $f$ and anchor set $A$
\end{algorithmic}
\end{algorithm}

\subsection{Anchor Initialization}
Based on our earlier observations, we find that most datasets exhibit strong intra-cluster similarity, which motivates us to initialize anchor points directly by $k$-means clustering. Applying $k$-means directly captures the cluster structure of the original dataset, leading to the selection of representative anchors that work well with subsequent gradient-based optimization.
However, for datasets with weaker intra-cluster similarity such as MMLU, we observe that random initialization can also perform surprisingly well, and in some cases, it even outperforms $k$-means initialization.

Therefore, in practice, we adopt an adaptive strategy for anchor initialization. For each dataset, we evaluate both $k$-means and random initialization on a small set of models as the validation set and choose the better method for the final initialization.


\subsection{Anchor Refinement}

Directly optimizing the weights of the anchors initialized as described above presents a key limitation that these initialization strategies are not tailored to the downstream task. The random initialization selects arbitrary points from the dataset, while the k-means approach captures only the clustering structure of the data. However, neither method participates in the end-to-end optimization process based on reconstruction loss, and the anchors remain fixed after initialization. As a result, the selected subsets may be suboptimal for perfomance estimation tasks.

To address this, we explore the relationship between anchors and candidates within the MLP model for anchor refinement. In our setup, the MLP takes the predictions of the anchors as input and uses the predictions of the candidates as targets to learn from. Two essential components naturally emerge from end-to-end training with reconstruction loss and backpropagation: \emph{errors} and \emph{gradients}. The error can be attributed to each non-anchor individually, reflecting how well the current set of anchors predicts each non-anchor. Conversely, the gradients can be obtained by propagating the errors back through the network and can be assigned to each anchor, indicating its contribution and importance in the perfomance estimation.

Formally, given $m$ models, $n$ items, and a prediction matrix $S \in \{-1, 1\}^{m \times n}$, the model-level calibration residuals are computed as:
\begin{equation}
    \label{eq:residuals}
    e = f(S \odot (\mathbf{1}_m W^\top)) - SW_a .
\end{equation}
Intuitively, if $e_j > 0$, it means that the proxy model overestimates model $j$. In contrast, if $e_j < 0$, it underestimates it. Now, consider the prediction pattern of a candidate item $i$ as a feature vector, representing the model performance over this item. If this feature tends to be positive where the residual vector is positive and negative where it is negative, then the absolute value of their dot product will be high. That is, features that are aligned with the structure of the residuals can consistently indicate whether a model is being overestimated or underestimated. For example, if a feature value of $+1$ typically corresponds to overestimated models and $-1$ to underestimated ones, the dot product with the residuals will have a large absolute magnitude, suggesting that it is an informative feature.
Based on this intuition, we define the \textbf{Candidate Importance Score (CIS)} for candidate item $i$ as the absolute value of the dot product between its prediction pattern and the residual vector:
\begin{equation}
    \begin{aligned}
        \text{CIS}_i &= \left| (S_{:,i})^T e \right| 
                     = \left| (S_{:,i})^T \left( f(S \odot \mathbf{1}_m W^\top) - SW_a \right) \right|
    \end{aligned}
\label{eq:CIS}
\end{equation}

Simultaneously, we can leverage the magnitude of the gradient during backpropagation to assess the influence of a given anchor on optimizing the prediction error. This is especially useful in the context of fast-training proxy models, where the absolute gradient is often a more direct indicator of impact on the error than the weight activation from the first layer. Therefore, we can compute the \textbf{Anchor Importance Score (AIS)} for anchor $i$ as follows: 

\begin{equation}
    \begin{aligned}
    \text{AIS}_i = \left\| \frac{\partial \mathcal{L}}{\partial S_{:,i}} \right\|_1
    =  \frac{1}{N} \cdot \frac{\partial}{\partial S_{:,i}} \left\| f(S \odot \mathbf{1}_m W^\top) - SW_a  \right\|_2.
    \end{aligned}
\label{eq:AIS}
\end{equation}

At each step of the iterative refinement process, we apply gradient descent with a proxy MLP model on the current set of anchors to calculate AIS of each anchor and CIS of each candidate. We then replace the anchor with the lowest AIS with the candidate that has the highest CIS. After $R$ refinement steps, we train a final MLP model which sharing the same architecture but with a different number of input features as the final model.

{
\subsection{Theoretical Analysis}
\begin{prop}[{More anchors yield no larger reconstruction error}]
\label{prop:monotone_anchors}
In linear weight setting, let $S \in \mathbb{R}^{m\times n}$ be the model--sample score matrix and define the true overall average as
\[
\mu = S W_a = \frac{1}{n} S \mathbf{1}_n \in \mathbb{R}^m,
\]
where $W_a = \frac{1}{n}\mathbf{1}_n$ is the uniform weight vector.  
For any anchor set $A \subseteq \{1,\dots,n\}$, define the feasible linear weight class
\[
\mathcal{W}(A,k) := \{ w\in\mathbb{R}^n : \mathrm{supp}(w)\subseteq A,\ \|w\|_0\le k \}
\]
and the optimal reconstruction error
\[
E(A) := \min_{w\in\mathcal{W}(A,k)} \|S w - \mu\|_1.
\]
Whenever $A\subseteq B$, it holds that
\[
E(B) \le E(A).
\]
\end{prop}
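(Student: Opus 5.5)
The plan is a feasible-set inclusion argument: minimizing the same objective over a larger set can only lower (or keep) the minimum. First I show that $\mathcal{W}(A,k)\subseteq\mathcal{W}(B,k)$ whenever $A\subseteq B$. Take any $w\in\mathcal{W}(A,k)$. Then $\mathrm{supp}(w)\subseteq A\subseteq B$, and the cardinality constraint $\|w\|_0\le k$ does not involve the anchor set at all. Hence $w\in\mathcal{W}(B,k)$.

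Next I check that the minima defining $E(A)$ and $E(B)$ are attained, so that writing $\min$ is legitimate. The set $\mathcal{W}(A,k)$ is nonempty because it contains $w=0$. It is also a finite union of linear subspaces $\{w:\mathrm{supp}(w)\subseteq T\}$, one for each $T\subseteq A$ with $|T|\le k$. On each such subspace, the map $w\mapsto \|Sw-\mu\|_1$ is a continuous convex function of the coordinates $w_T$, so its infimum $v_T$ is finite and nonnegative. I expect attainment to be the only genuinely delicate point, since the objective need not be coercive when $S_{:,T}$ has a nontrivial kernel. I would handle it in one of two ways:
\begin{itemize}
\item minimize over the quotient by the kernel, on which $w_T\mapsto S_{:,T}w_T$ is injective and the objective is coercive;
\item observe that the objective is a continuous function of $y=S_{:,T}w_T$, which ranges over a closed subspace, so the distance from $\mu$ to that subspace in the $\ell_1$ norm is attained.
\end{itemize}
Taking the minimum of finitely many attained values then gives an attained minimum over $\mathcal{W}(A,k)$. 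If this step were skipped, the statement could still be read with $\inf$ in place of $\min$, and the inequality would survive unchanged.

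Finally, let $w_A^\star$ attain $E(A)$. Since $w_A^\star\in\mathcal{W}(B,k)$, we get
\[
E(B)=\min_{w\in\mathcal{W}(B,k)}\|Sw-\mu\|_1\le \|Sw_A^\star-\mu\|_1=E(A),
\]
which is the claim. The objective is identical in both problems: $\mu=SW_a$ is fixed and independent of the anchor set. So no further comparison is needed.
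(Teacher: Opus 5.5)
Your proof is correct and follows the same route as the paper: $\mathcal{W}(A,k)\subseteq\mathcal{W}(B,k)$ when $A\subseteq B$, and minimizing the same objective over a larger feasible set cannot increase the minimum. Your extra check that the minima are attained is valid (finitely many supports $T$, with the $\ell_1$ distance from $\mu$ to the finite-dimensional subspace $\mathrm{range}(S_{:,T})$ attained on each); the paper simply takes this for granted.
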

\begin{prop}[Anchor refinement decreases $L_2$ reconstruction error]
\label{prop:refinement_decrease_linear}
In the linear weight setting, consider the linear reconstruction model $\hat\mu = S w$ with L2 loss
$
\mathcal{L}(w)=\|S w - \mu\|_2^2.
$
Let $A$ be the current anchor set and $w$ the current linear weights.
Let the residual be $r = S w - \mu$.
In linear weight setting, we have 
\[
\mathrm{CIS}_j = |s_j^\top r| ,\qquad j \notin A,
\]
\[
\mathrm{AIS}_i = |s_i^\top r| ,\qquad i \in A.
\]

Suppose the refinement step removes
$
i^\star = \arg\min_{i\in A} \mathrm{AIS}_i,
j^\star = \arg\max_{j\notin A} \mathrm{CIS}_j,
$
and forms $A' = (A\setminus\{i^\star\})\cup\{ j^\star \}$.
After re-optimizing $w$ over $A'$, the new optimal linear reconstruction error satisfies
\[
E(A') \le E(A).
\]
Moreover, if $|s_{j^\star}^\top r| > |s_{i^\star}^\top r|$ and $s_{j^\star}$ is not in the linear span of $\{s_i:i\in A\}$, then
\[
E(A') < E(A).
\]
\end{prop}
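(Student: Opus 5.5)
The plan is to reduce everything to the quadratic structure of the least-squares objective $\mathcal{L}(w)=\|Sw-\mu\|_2^2$ and to split the swap $A\to A'$ into two elementary moves: first \emph{add} $j^\star$, giving the enlarged set $A\cup\{j^\star\}$; then \emph{remove} $i^\star$. For the addition step I will use the same monotonicity argument as in Proposition~\ref{prop:monotone_anchors}, now with the $L_2$ loss. Any $w$ feasible on $A$ is feasible on $A\cup\{j^\star\}$, so $E(A\cup\{j^\star\})\le E(A)$.

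To get a quantitative gain I will do an exact one-dimensional line search along $e_{j^\star}$. We have
\[
\mathcal{L}(w+t e_{j^\star})=\|r\|_2^2+2t\,s_{j^\star}^\top r+t^2\|s_{j^\star}\|_2^2 .
\]
Minimising over $t$ gives a decrease of exactly $(s_{j^\star}^\top r)^2/\|s_{j^\star}\|_2^2=\mathrm{CIS}_{j^\star}^2/\|s_{j^\star}\|_2^2$. This shows that choosing the candidate with the largest $\mathrm{CIS}$ is the greedy optimal first-order choice, up to the column norm. Because $S\in\{-1,1\}^{m\times n}$, all columns satisfy $\|s_j\|_2^2=m$, so this normalisation is uniform and the argmax of $\mathrm{CIS}$ is exactly the argmax of the guaranteed decrease. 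The decrease is strictly positive whenever $s_{j^\star}^\top r\neq 0$. If $w$ is the least-squares optimum on $A$, the normal equations give $s_i^\top r=0$ for all $i\in A$. Hence the hypotheses $|s_{j^\star}^\top r|>|s_{i^\star}^\top r|\ge 0$ and $s_{j^\star}\notin\mathrm{span}\{s_i:i\in A\}$ guarantee a strictly positive decrease, which gives the strict part.

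For the removal step I will compare $E(A')$ with the value of an explicit feasible point on $A'$. The natural candidate is $\tilde w=w-w_{i^\star}e_{i^\star}+t^\star e_{j^\star}$. Expanding $\mathcal{L}(\tilde w)$ gives $\|r\|^2$, minus the line-search gain above, plus a removal penalty of order $w_{i^\star}^2\|s_{i^\star}\|^2-2w_{i^\star}s_{i^\star}^\top r$, plus a cross term $-2t^\star w_{i^\star}s_{i^\star}^\top s_{j^\star}$. The linearised removal cost is controlled by $|s_{i^\star}^\top r|=\mathrm{AIS}_{i^\star}$, which is minimal by the choice of $i^\star$. The aim is then to show that the gain $\mathrm{CIS}_{j^\star}^2/m$ dominates the removal penalty, and to conclude $E(A')\le\mathcal{L}(\tilde w)\le\mathcal{L}(w)=E(A)$ after re-optimising over $A'$.

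The main obstacle is precisely this removal step. The first-order quantity $\mathrm{AIS}_{i^\star}$ vanishes at an optimum, but the true cost of deleting an anchor is second order, namely $w_{i^\star}^2/[(S_A^\top S_A)^{-1}]_{i^\star i^\star}$, and it is not controlled by $\mathrm{AIS}$ alone. The non-strict inequality therefore cannot hold unconditionally. I will first try to make it rigorous under the reading implicit in the statement: the comparison is made within the first-order (linearised) model of the loss in which $\mathrm{AIS}$ and $\mathrm{CIS}$ are defined, where the change from the swap is $-\mathrm{CIS}_{j^\star}^2/m+O(\mathrm{AIS}_{i^\star})$. Failing that, I will add the explicit sufficient condition that the removal cost of $i^\star$ is at most $\mathrm{CIS}_{j^\star}^2/m$ (for instance $w_{i^\star}=0$ or small), and state the result under that condition.
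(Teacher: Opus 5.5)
Your analysis of the removal step is right, and it is more careful than the paper's own proof. The proposition is false as stated, so no argument can establish it without an extra hypothesis. The paper never confronts the obstacle you isolate. From $\partial\mathcal{L}/\partial w_j=2s_j^\top r$ it asserts that replacing $i^\star$ by $j^\star$ ``strictly enlarges the set of descent directions,'' and concludes $E(A')\le E(A)$. But a swap \emph{deletes} the direction $s_{i^\star}$, so $\mathrm{span}\{s_i:i\in A'\}$ need not contain $\mathrm{span}\{s_i:i\in A\}$. Moreover, first-order quantities at the current $w$ (which the paper explicitly takes to be non-optimal on $A$) do not control the optimal values $E(A)$ and $E(A')$.

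Here is a counterexample with $\pm1$ data and a unique $i^\star$. Let $m=3$, $n=4$, $s_1=(1,1,1)^\top$, $s_2=(1,1,-1)^\top$, $s_3=(1,-1,1)^\top$, $s_4=(-1,1,1)^\top$, so $\mu=\tfrac14 S\mathbf{1}_4=\tfrac12 s_1$. Take $A=\{1,2\}$, $k=2$, and $w=(\tfrac12+\alpha,\,-3\alpha,\,0,\,0)^\top$ with $\alpha\neq0$. Then $r=\alpha(s_1-3s_2)=\alpha(-2,-2,4)^\top$, so $\mathrm{AIS}_1=0$, $\mathrm{AIS}_2=8|\alpha|$ and $\mathrm{CIS}_3=\mathrm{CIS}_4=4|\alpha|$. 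Thus $i^\star=1$, $j^\star\in\{3,4\}$, and $|s_{j^\star}^\top r|>|s_{i^\star}^\top r|$. Also $s_{j^\star}\notin\mathrm{span}\{s_1,s_2\}$, since any three columns are independent ($s_1=s_2+s_3+s_4$). Yet $E(A)=0$ because $\mu\in\mathrm{span}\{s_1,s_2\}$. For $A'=\{2,3\}$ the projection of $\mu$ is $(\tfrac12,0,0)^\top$, so $E(A')=\tfrac12$. The same holds for $\{2,4\}$, and also if $E$ is measured in $L_1$. Both the non-strict and the strict claims fail. Here $\mathrm{AIS}_{i^\star}=0$ exactly, so the first-order removal term vanishes and all the damage is the second-order deletion cost you predicted.

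So drop reading (i). The gain $\mathrm{CIS}_{j^\star}^2/m$ is itself produced by the quadratic term $t^2\|s_{j^\star}\|_2^2$. A comparison that keeps that term while discarding $w_{i^\star}^2\|s_{i^\star}\|_2^2$ does not model $E$.

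Route (ii) is the correct repair, and it has an exact form. Let $B=A\cup\{j^\star\}$ and $G=S_B^\top S_B$, assumed full column rank. Drop $\|w\|_0\le k$ for this intermediate set, which may have $k+1$ elements, and let $\hat w$ be the least-squares fit on $B$. Then $E(A')=E(B)+\hat w_{i^\star}^2/[G^{-1}]_{i^\star i^\star}$. For $w$ optimal on $A$, the gain is $E(A)-E(B)=(s_{j^\star}^\top r)^2/\|P_A^{\perp}s_{j^\star}\|_2^2\ge\mathrm{CIS}_{j^\star}^2/m$, where $P_A^{\perp}$ is the projector onto $\mathrm{span}\{s_i:i\in A\}^{\perp}$. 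Hence $E(A')\le E(A)$ (resp.\ $<$) holds exactly when this deletion cost is at most (resp.\ strictly below) the gain. Your condition built from $\tilde w$ is a sufficient version of this.

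Two smaller points:
\begin{itemize}
\item Your strict-decrease argument gives only $E(A\cup\{j^\star\})<E(A)$, not the stated $E(A')<E(A)$.
\item The identity $\|s_j\|_2^2=m$ uses $S\in\{-1,1\}^{m\times n}$. The proposition allows $S\in\mathbb{R}^{m\times n}$, so make this an explicit hypothesis.
\end{itemize}
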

}
\section{Experiments}
\label{sec:experiments}

\subsection{Experimental Setup}
\label{subsec:setup}
\para{Datasets}
We collect the LLM evaluation results from Open-LLM Leaderboard \citep{open-llm-leaderboard-v2}, 
and we obtain model-item accuracy matrix on six LLM benchmarks including ARC \citep{clark2018think}, GSM8K \citep{cobbe2021training}, HellaSwag \citep{zellers2019hellaswag},  MMLU \citep{hendrycks2020measuring}, TruthfulQA \citep{lin2021truthfulqa}, and Winogrande \citep{sakaguchi2021winogrande}.
For LLM benchmarks, it is noteworthy that we expand the number of models from 300 in TinyBenchmark to 5,000, allowing us to more thoroughly evaluate generalization performance across a wider range of models. 

\para{Implementation Details} We randomly select 200 models as the validation and test sets for the LLM benchmarks, with equal sizes. The remaining data are used as the training set. We utilize a 4-layer MLP in LLM Benchmarks and 6e-4 as the learning rate. The refinement step is set to be 10. We utilize MAE and Kendall's~$\tau$ as two metrics for evaluation.
 
\subsection{Main Results}
\begin{table}[!t]
\centering
\caption{\textbf{Main results on LLM benchmarks.} \name{} consistently outperform baselines by up to 2\% lower estimate errors and 0.1 improvement in Kendall's~$\tau$ than baselines. }
\begin{adjustbox}{max width=\textwidth}
\begin{tabular}{ll*{5}{cc}}
\toprule
\multirow{2}{*}{\textbf{Dataset}} & \multirow{2}{*}{\textbf{Method}} 
& \multicolumn{2}{c}{\textbf{Anchor = 20}} 
& \multicolumn{2}{c}{\textbf{Anchor = 40}} 
& \multicolumn{2}{c}{\textbf{Anchor = 60}} 
& \multicolumn{2}{c}{\textbf{Anchor = 80}} 
& \multicolumn{2}{c}{\textbf{Anchor = 100}} \\
\cmidrule(lr){3-4} \cmidrule(lr){5-6} \cmidrule(lr){7-8} \cmidrule(lr){9-10} \cmidrule(lr){11-12}
& & \textbf{MAE (\%) $\downarrow$} & \textbf{$\tau$ $\uparrow$} 
& \textbf{MAE (\%) $\downarrow$} & \textbf{$\tau$ $\uparrow$} 
& \textbf{MAE (\%) $\downarrow$} & \textbf{$\tau$ $\uparrow$} 
& \textbf{MAE (\%) $\downarrow$} & \textbf{$\tau$ $\uparrow$} 
& \textbf{MAE (\%) $\downarrow$} & \textbf{$\tau$ $\uparrow$} \\
\midrule
\multirow{4}{*}{ARC}        
    & Anchor Points  & 4.004 & 0.769 & 2.375 & 0.866 & 2.890 & 0.867 & 2.289 & 0.868 & 10.620 & 0.578 \\
    & gp-IRT         & 5.332 & 0.641 & 3.642 & 0.698 & 2.959 & 0.758 & 2.612 & 0.761 & 2.274 & 0.787 \\
    & TailoredBench          & 3.426 & 0.824 & 2.646 & 0.854 & 2.448 & 0.852 & 2.816 & 0.862 & 2.413 & 0.873 \\
    & \name{}    & \textbf{1.778} & \textbf{0.863} & \textbf{1.581} & \textbf{0.883} & \textbf{1.404} & \textbf{0.902} & \textbf{1.227} & \textbf{0.910} & \textbf{1.165} & \textbf{0.917} \\
\midrule
\multirow{4}{*}{GSM8K}     
    & Anchor Points  & 4.433 & 0.844 & 3.756 & 0.878 & 3.631 & 0.916 & 2.778 & 0.906 & 5.295 & 0.842 \\
    & gp-IRT         & 5.275 & 0.802 & 3.984 & 0.832 & 3.161 & 0.871 & 2.774 & 0.880 & 2.424 & 0.887 \\
    & TailoredBench          & 5.412 & 0.833 & 4.157 & 0.885 & 4.271 & 0.892 & 4.003 & 0.900 & 4.203 & 0.912 \\
    & \name{}    & \textbf{3.305} & \textbf{0.872} & \textbf{2.321} & \textbf{0.908} & \textbf{1.960} & \textbf{0.925} & \textbf{1.754} & \textbf{0.931} & \textbf{1.619} & \textbf{0.936} \\
\midrule
\multirow{4}{*}{HellaSwag} 
    & Anchor Points  & 3.272 & 0.796 & 2.619 & 0.875 & 2.416 & 0.856 & 1.962 & 0.847 & 2.012 & 0.889 \\
    & gp-IRT         & 5.323 & 0.661 & 3.501 & 0.687 & 2.754 & 0.745 & 1.992 & 0.784 & 1.750 & 0.783 \\
    & TailoredBench          & 2.352 & \textbf{0.811} & 2.257 & 0.847 & 1.868 & 0.861 & 1.957 & 0.857 & 1.968 & 0.876 \\
    & \name{}    & \textbf{1.477} & 0.857 & \textbf{1.210} & \textbf{0.890} & \textbf{0.993} & \textbf{0.906} & \textbf{0.942} & \textbf{0.910} & \textbf{0.827} & \textbf{0.918} \\
\midrule
\multirow{4}{*}{MMLU}      
    & Anchor Points  & 4.898 & 0.727 & 2.830 & 0.801 & 2.331 & 0.850 & 1.964 & 0.856 & 7.890 & 0.764 \\
    & gp-IRT         & 5.940 & 0.569 & 3.802 & 0.692 & 3.190 & 0.710 & 2.537 & 0.798 & 2.202 & 0.829 \\
    & TailoredBench          & 4.046 & 0.755 & 2.677 & 0.845 & 2.421 & 0.857 & 2.216 & 0.876 & 2.019 & 0.862 \\
    & \name{}    & \textbf{1.718} & \textbf{0.832} & \textbf{1.282} & \textbf{0.871} & \textbf{0.997} & \textbf{0.890} & \textbf{0.962} & \textbf{0.896} & \textbf{0.842} & \textbf{0.908} \\
\midrule
\multirow{4}{*}{TruthfulQA} 
    & Anchor Points  & 3.215 & 0.803 & 2.443 & 0.838 & 1.958 & 0.870 & 1.758 & 0.885 & 1.733 & 0.891 \\
    & gp-IRT         & 4.452 & 0.712 & 3.032 & 0.771 & 2.250 & 0.823 & 1.973 & 0.836 & 1.808 & 0.847 \\
    & TailoredBench          & 2.554 & 0.847 & 2.105 & 0.855 & 1.928 & 0.863 & 1.718 & 0.874 & 1.577 & 0.895 \\
    & \name{}    & \textbf{1.756} & \textbf{0.878} & \textbf{1.589} & \textbf{0.886} & \textbf{1.243} & \textbf{0.911} & \textbf{1.083} & \textbf{0.922} & \textbf{1.027} & \textbf{0.931} \\
\midrule
\multirow{4}{*}{Winogrande}
    & Anchor Points  & 5.577 & 0.694 & 4.037 & 0.737 & 4.038 & 0.710 & 3.621 & 0.752 & 3.019 & 0.810 \\
    & gp-IRT         & 4.903 & 0.533 & 3.417 & 0.547 & 2.617 & 0.630 & 2.284 & 0.659 & 1.957 & 0.725 \\
    & TailoredBench          & 3.212 & 0.705 & 3.740 & 0.751 & 3.478 & 0.760 & 3.130 & 0.752 & 3.120 & 0.788 \\
    & \name{}    & \textbf{2.088} & \textbf{0.794} & \textbf{1.500} & \textbf{0.853} & \textbf{1.361} & \textbf{0.867} & \textbf{1.182} & \textbf{0.882} & \textbf{1.027} & \textbf{0.897} \\
\bottomrule
\end{tabular}
\end{adjustbox}
\label{tab:llm_results}
\end{table}

\begin{figure}[!t]
    \begin{minipage}{0.32\linewidth}
		\includegraphics[width=\linewidth]{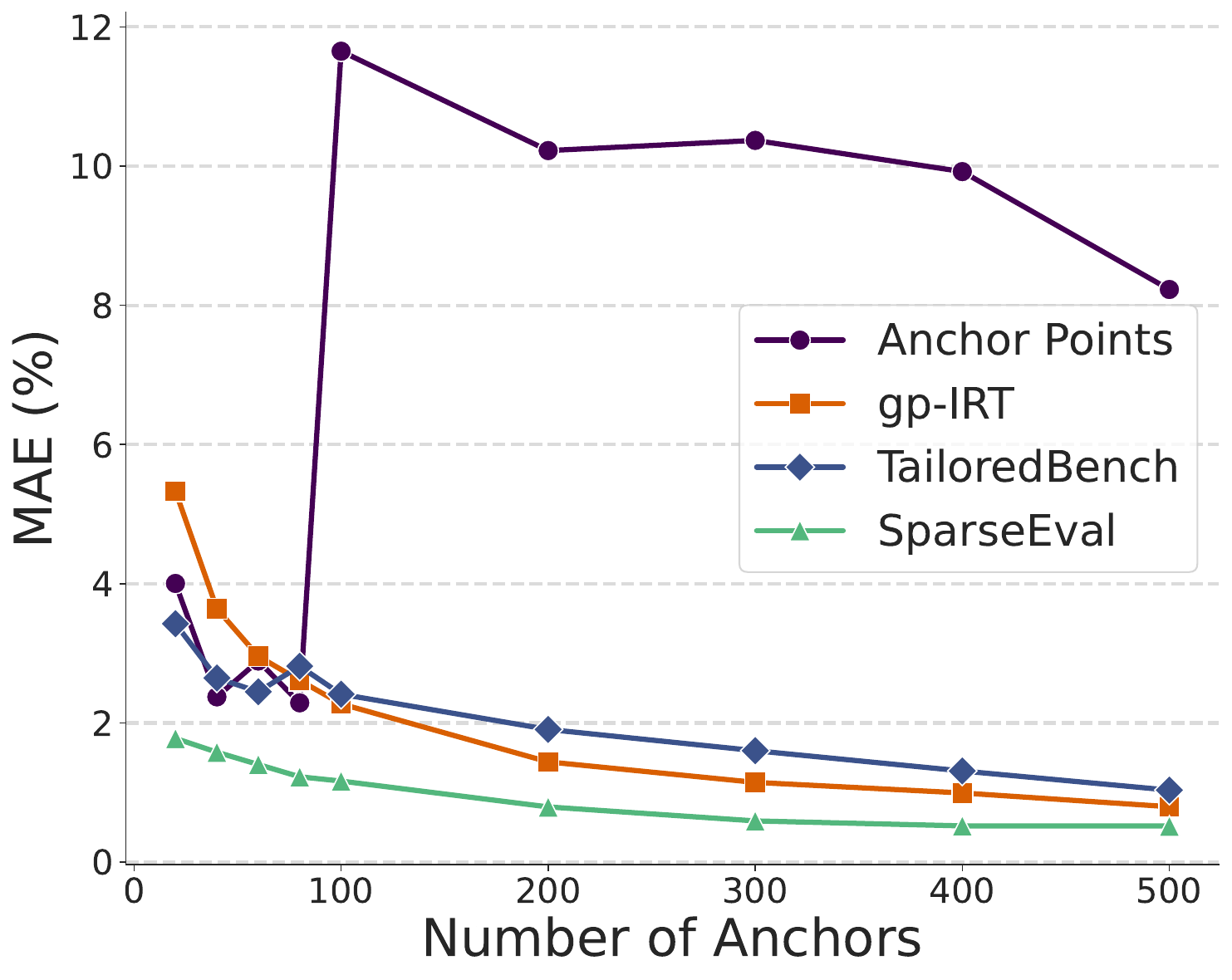}
		\caption{\textbf{Error Trend on ARC.} \name{} consistently outperforms baselines.}
        \label{fig:error_trend}
	\end{minipage}
    \hfill
    \begin{minipage}{0.32\linewidth}
		\includegraphics[width=\linewidth]{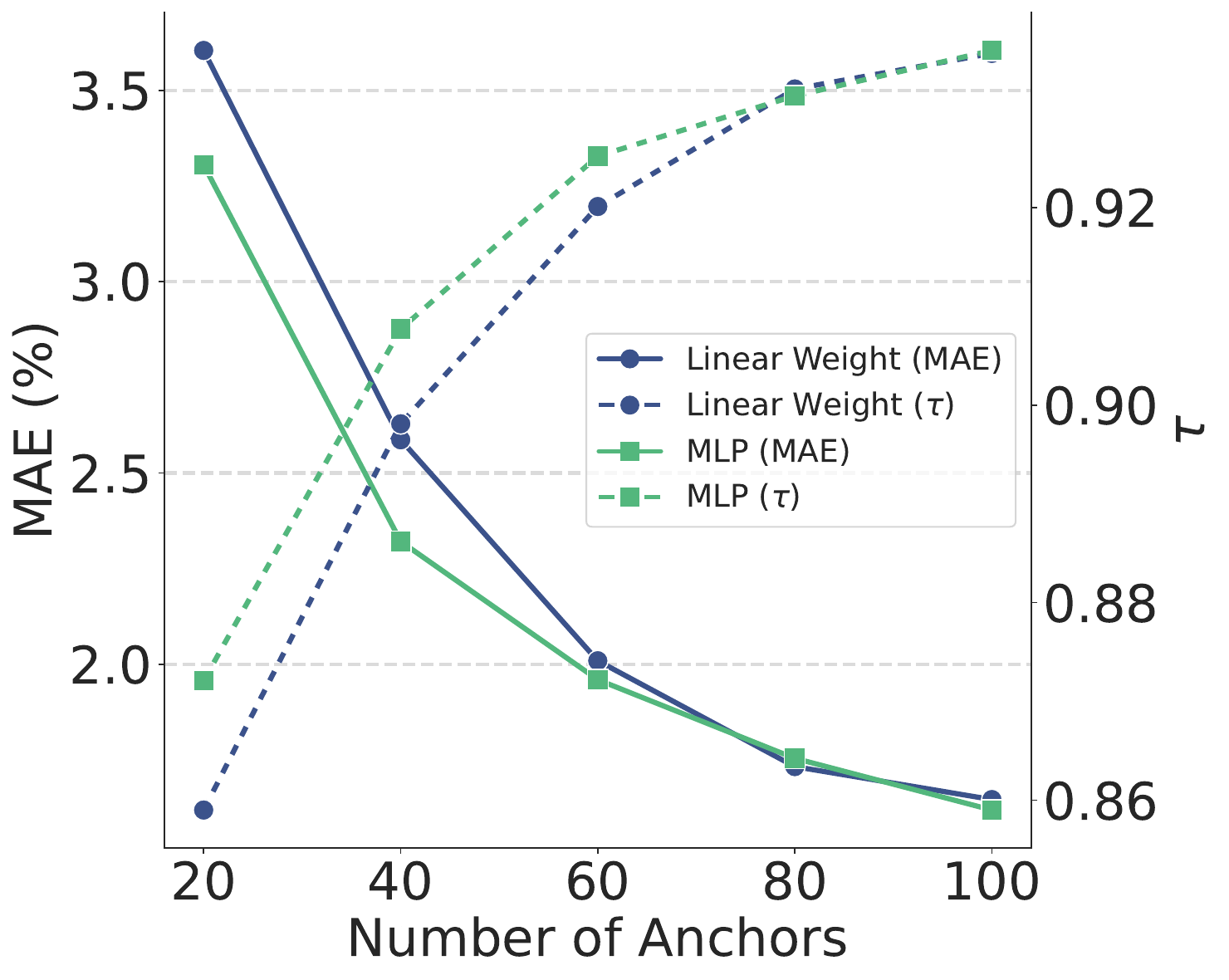}
		\caption{\textbf{Ablation over network architecture on GSM8K.}}
        \label{fig:ablation_linear}
	\end{minipage}
    \hfill
    \begin{minipage}{0.32\linewidth}
		\includegraphics[width=\linewidth]{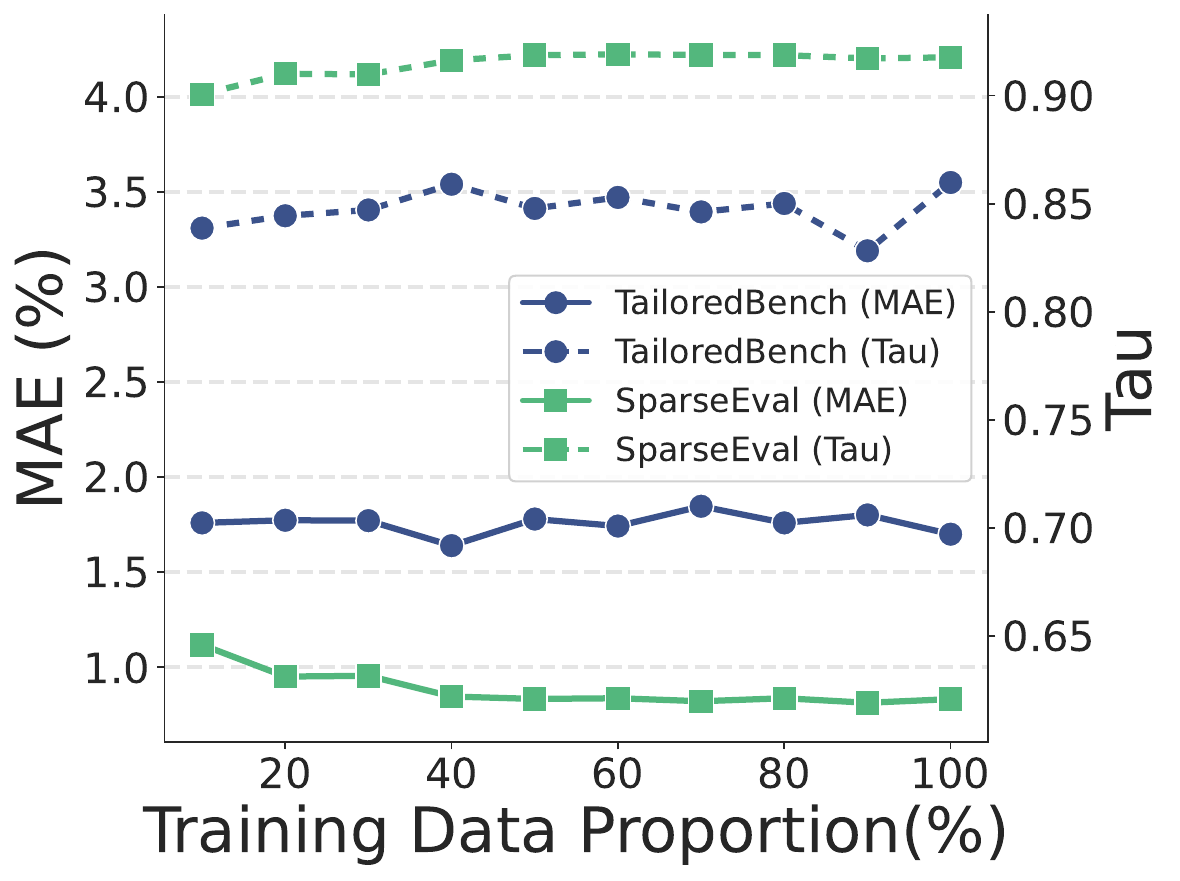}
		\caption{{\textbf{Ablation over training data proportion on HellaSwag.}}}
        \label{fig:ablation_propotion}
	\end{minipage}
    \vspace{-1em}
\end{figure}

In Table~\ref{tab:llm_results}, we compare our method with several baselines, including Anchor Points~\citep{vivek2023anchor}, gp-IRT~\citep{polo2024tinybenchmarks}, and TailoredBench~\citep{yuan2025beyond}. The results demonstrate that \name{} significantly outperforms all baseline methods, consistently achieving lower estimation error and higher correlation coefficients across varying numbers of anchors. 
As the number of models scales up to 5000, traditional cluster-based and IRT-based methods struggle to fully utilize the large volume of data. In particular, when using only 100 items for prediction, these methods often exhibit estimation errors exceeding 2\%. In contrast, \name{} leverages anchor refinement and weight optimization to fully exploit the training data and enhance generalization via gradient descent, achieving estimation errors below 1\%, outperforming prior state-of-the-art approaches by a substantial margin. In addition to a low MAE, \name{} also achieves Kendall's~$\tau$ scores above 0.90, indicating a strong correlation with the original model rankings and further validating the effectiveness and practicality of our method.

\begin{table}[!t]
\centering
\caption{\textbf{Ablation results on anchor selection.} \name{} benefits from anchor refinement for downsteam estimates tasks. }
\begin{adjustbox}{max width=\textwidth}
\begin{tabular}{ll*{5}{cc}}
\toprule
\multirow{2}{*}{\textbf{Dataset}} & \multirow{2}{*}{\textbf{Method}} 
& \multicolumn{2}{c}{\textbf{Anchor = 20}} 
& \multicolumn{2}{c}{\textbf{Anchor = 40}} 
& \multicolumn{2}{c}{\textbf{Anchor = 60}} 
& \multicolumn{2}{c}{\textbf{Anchor = 80}} 
& \multicolumn{2}{c}{\textbf{Anchor = 100}} \\
\cmidrule(lr){3-4} \cmidrule(lr){5-6} \cmidrule(lr){7-8} \cmidrule(lr){9-10} \cmidrule(lr){11-12}
& & \textbf{MAE (\%) $\downarrow$} & \textbf{$\tau$ $\uparrow$} 
& \textbf{MAE (\%) $\downarrow$} & \textbf{$\tau$ $\uparrow$} 
& \textbf{MAE (\%) $\downarrow$} & \textbf{$\tau$ $\uparrow$} 
& \textbf{MAE (\%) $\downarrow$} & \textbf{$\tau$ $\uparrow$} 
& \textbf{MAE (\%) $\downarrow$} & \textbf{$\tau$ $\uparrow$} \\
\midrule
\multirow{3}{*}{ARC}        
    & Random        & 3.131 & 0.741 & 2.121 & 0.819 & 1.770 & 0.856 & 1.433 & 0.881 & 1.339 & 0.890 \\
    & $k$-means     & 1.945 & 0.847 & 1.656 & 0.878 & 1.427 & 0.897 & 1.314 & 0.903 & 1.218 & 0.913 \\
    & \name{}       & \textbf{1.778} & \textbf{0.863} & \textbf{1.581} & \textbf{0.883} & \textbf{1.404} & \textbf{0.902} & \textbf{1.227} & \textbf{0.910} & \textbf{1.165} & \textbf{0.917} \\
\midrule
\multirow{3}{*}{GSM8K}     
    & Random        & 3.831 & 0.856 & 2.694 & 0.896 & 2.467 & 0.908 & 2.086 & 0.918 & 1.857 & 0.928 \\
    & $k$-means     & 3.544 & 0.860 & 2.491 & 0.898 & 2.042 & \textbf{0.925} & 1.767 & \textbf{0.933} & 1.631 & \textbf{0.938} \\
    & \name{}       & \textbf{3.305} & \textbf{0.872} & \textbf{2.321} & \textbf{0.908} & \textbf{1.960} & \textbf{0.925} & \textbf{1.754} & 0.931 & \textbf{1.619} & 0.936 \\
\midrule
\multirow{3}{*}{HellaSwag} 
    & Random        & 2.465 & 0.728 & 1.804 & 0.788 & 1.389 & 0.839 & 1.207 & 0.858 & 1.152 & 0.867 \\
    & $k$-means     & 1.631 & \textbf{0.859} & 1.258 & 0.882 & \textbf{0.973} & \textbf{0.906} & \textbf{0.931} & 0.909 & \textbf{0.772} & \textbf{0.919} \\
    & \name{}       & \textbf{1.477} & 0.857 & \textbf{1.210} & \textbf{0.890} & 0.993 & \textbf{0.906} & 0.942 & \textbf{0.910} & 0.827 & 0.918 \\
\midrule
\multirow{3}{*}{MMLU}      
    & Random        & 2.152 & 0.774 & 1.393 & 0.854 & 1.117 & 0.879 & 0.996 & 0.894 & 0.850 & 0.903 \\
    & $k$-means     & 2.152 & 0.774 & 1.393 & 0.854 & 1.117 & 0.879 & 0.996 & 0.894 & 0.850 & 0.903 \\
    & \name{}       & \textbf{1.718} & \textbf{0.832} & \textbf{1.282} & \textbf{0.871} & \textbf{0.997} & \textbf{0.890} & \textbf{0.962} & \textbf{0.896} & \textbf{0.842} & \textbf{0.908} \\
\midrule
\multirow{3}{*}{TruthfulQA} 
    & Random        & 2.887 & 0.800 & 2.000 & 0.854 & 1.601 & 0.882 & 1.356 & 0.902 & 1.204 & 0.915 \\
    & $k$-means     & 2.107 & 0.849 & \textbf{1.417} & \textbf{0.892} & 1.296 & 0.902 & 1.181 & 0.914 & 1.058 & 0.926 \\
    & \name{}       & \textbf{1.756} & \textbf{0.878} & 1.589 & 0.886 & \textbf{1.243} & \textbf{0.911} & \textbf{1.083} & \textbf{0.922} & \textbf{1.027} & \textbf{0.931} \\
\midrule
\multirow{3}{*}{Winogrande}
    & Random        & 2.617 & 0.726 & 1.761 & 0.821 & 1.450 & 0.851 & 1.277 & 0.870 & 1.104 & 0.886 \\
    & $k$-means     & 2.617 & 0.726 & 1.761 & 0.821 & 1.450 & 0.851 & 1.277 & 0.870 & 1.104 & 0.886 \\
    & \name{}       & \textbf{2.088} & \textbf{0.794} & \textbf{1.500} & \textbf{0.853} & \textbf{1.361} & \textbf{0.867} & \textbf{1.182} & \textbf{0.882} & \textbf{1.027} & \textbf{0.897} \\
\bottomrule
\end{tabular}
\end{adjustbox}
\label{tab:anchor_selection}
\end{table}

\begin{figure}[!t]
    \begin{subfigure}{0.24\linewidth}
		\includegraphics[width=\linewidth]{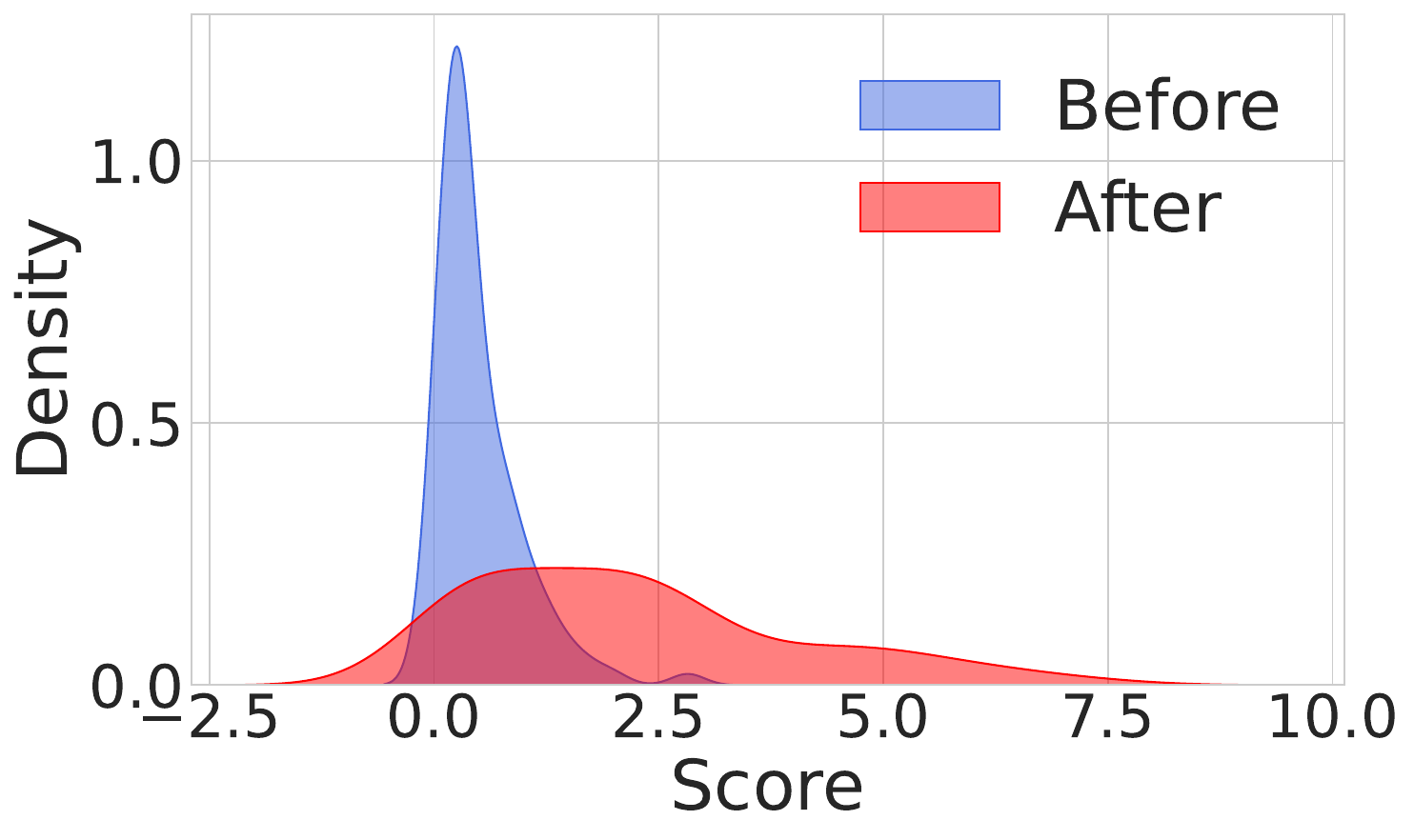}
		\caption{GSM8K AIS}
	\end{subfigure}
    \hfill
    \begin{subfigure}{0.24\linewidth}
		\includegraphics[width=\linewidth]{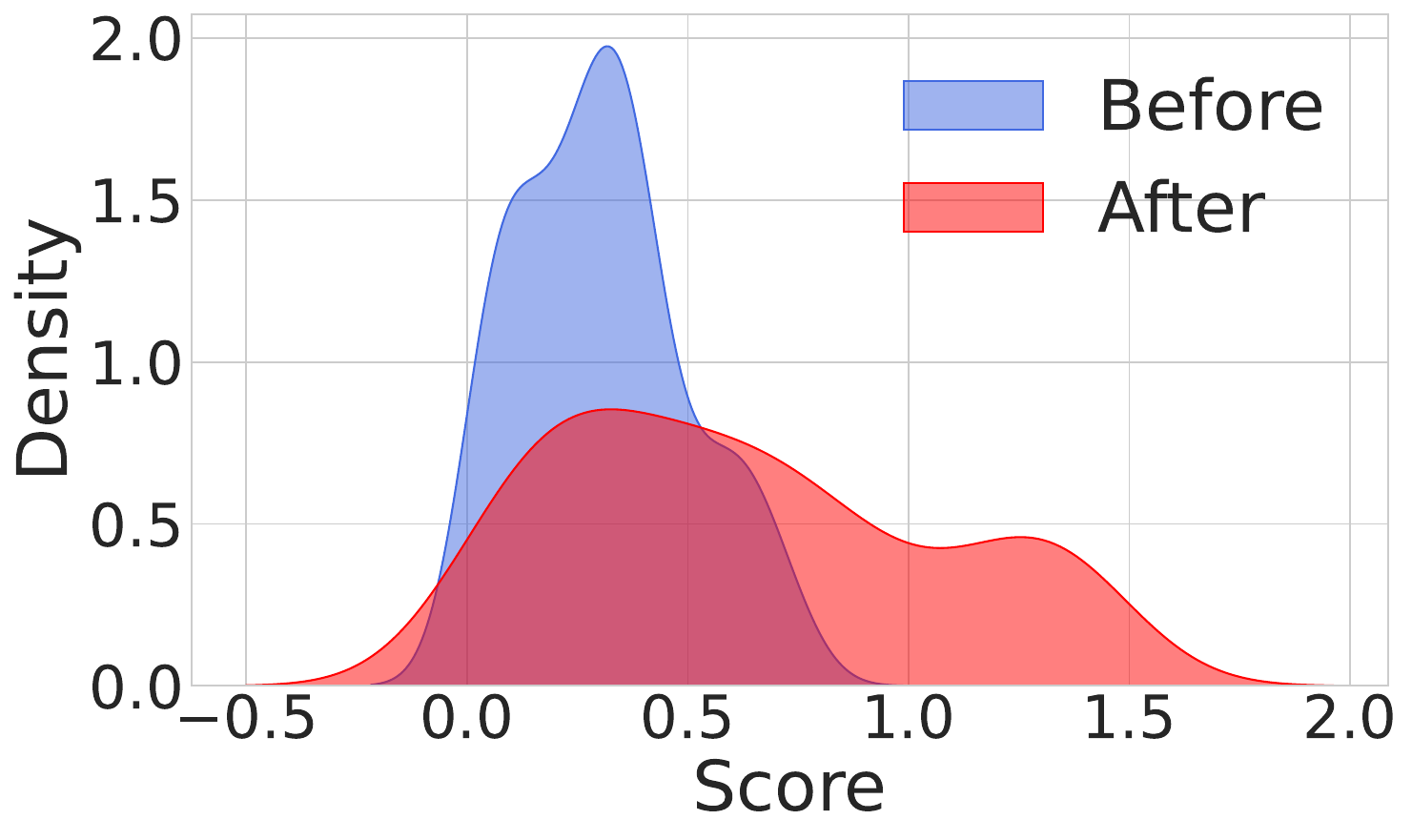}
		\caption{ARC AIS}
	\end{subfigure}
    \hfill
	\begin{subfigure}{0.24\linewidth}
		\includegraphics[width=\linewidth]{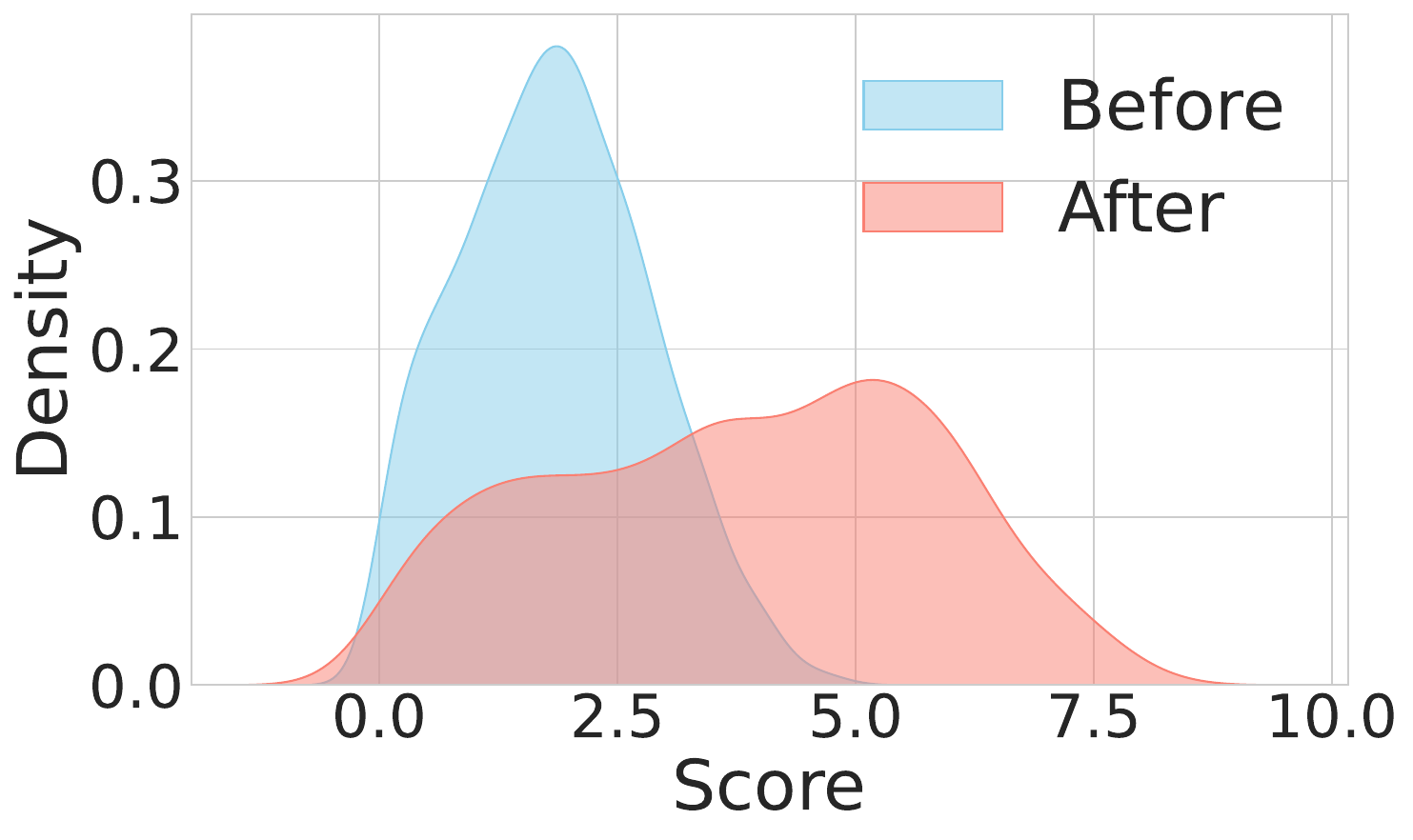}
		\caption{{GSM8K CIS}}
	\end{subfigure}
    \hfill
	\begin{subfigure}{0.24\linewidth}
		\includegraphics[width=\linewidth]{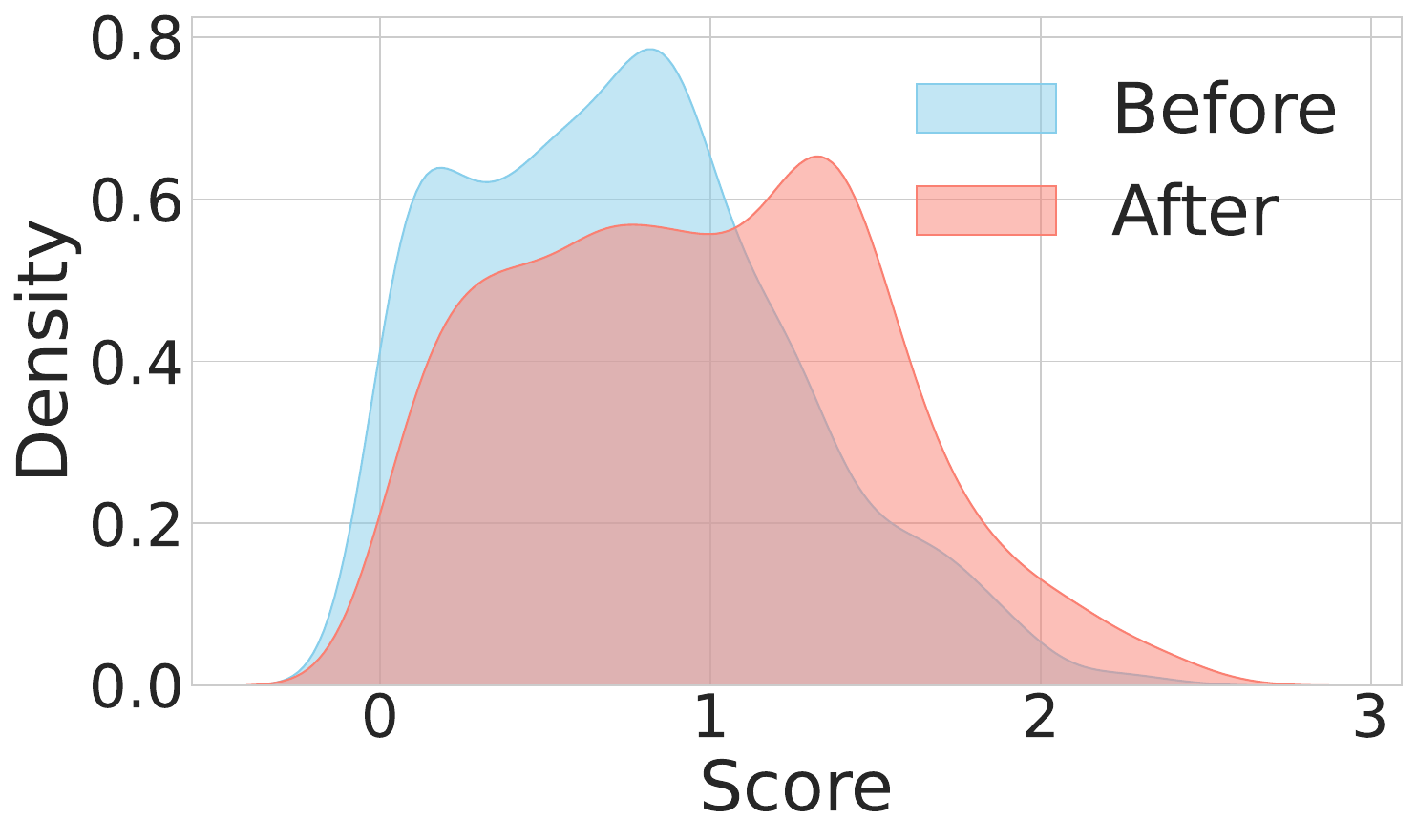}
		\caption{{ARC CIS}}
	\end{subfigure}
    \caption{\textbf{AIS and CIS distribution w.r.t anchor refinement.} The importance scores shift noticeably to the right after refinement, indicating an improvement in the quality of the anchor set.}
    \label{fig:refine_distribution}
    \vspace{-1em}
\end{figure}

\subsection{Ablation Study}

\para{Over 5x reduction in anchor points than baselines.} As shown in Fig.~\ref{fig:error_trend}, \name{} achieves comparable performance to other baselines that use 500 anchors, with only 100 anchors, representing more than a 5× reduction in anchor points. 
Interestingly, as the number of anchors increases, the performance of Anchor Points  drops significantly, indicating that clustering alone without adaptive anchor weight adjustment is far from sufficient.

\para{Sparse optimization benefits from deeper architecture.}
We compare the estimation performance between a traditional linear weight and an MLP-based estimator in Fig.~\ref{fig:ablation_linear}. 
Due to its limited representational capacity, the linear model performs worse, whereas the deeper MLP architecture is better suited for the challenging task of anchor weight optimization.  

{\para{Comparable performance with limited training data.} We adjust the proportion of training data and present the results in Fig.~\ref{fig:ablation_propotion}.
As expected, model performance degrades as the amount of training data decreases. However, \name{} consistently outperforms the baseline, regardless of whether limited or full data is used. Even with only 20\% of the data, \name{} is able to maintain a MAE under 1\% and $\tau$ higher than 0.90, demonstrating the robustness of our method.}

\para{Anchor Selection.}
We compare different anchor selection methods to investigate the effectiveness of our anchor refinement strategy. As shown in Table~\ref{tab:anchor_selection}, anchor refinement clearly leads to a superior anchor set. This indicates that both AIS and CIS, used as metrics for selecting anchors and candidates respectively, are highly aligned with the objective of the downstream prediction task. Notably, the improvement brought by anchor refinement becomes more significant when the number of anchors is limited. This suggests that, in such cases, strategies like \textit{k}-means and random selection fail to correctly represent the characteristics of the original dataset, and multiple rounds of refinement are necessary to discover a better anchor set.


\subsection{Analysis and Discussion}

\begin{table}[!t]
\centering
\caption{{\textbf{MAE on the Hold-out Model Families.} \name{} generalize better on the unseen models.}}
\begin{adjustbox}{max width=0.7\textwidth}
\begin{tabular}{lccc}
\toprule
\textbf{Method} & \textbf{TailoredBench} & \textbf{gp-IRT} & \textbf{SparseEval} \\
\midrule
deepseek-coder-1.3b-instruct    & 6.464  & 1.873 & 1.392 \\
deepseek-coder-6.7b-base        & 3.751  & 2.083 & 2.132 \\
deepseek-coder-6.7b-instruct    & 1.096  & 2.304 & 0.264 \\
deepseek-coder-7b-instruct-v1.5 & 4.182  & 1.681 & 3.700 \\
deepseek-llm-67b-base           & 0.020  & 1.882 & 0.730 \\
deepseek-llm-67b-chat           & 1.448  & 1.595 & 0.156 \\
deepseek-llm-7b-chat            & 2.816  & 2.624 & 0.738 \\
deepseek-math-7b-base           & 2.459  & 2.019 & 1.777 \\
deepseek-math-7b-instruct       & 3.712  & 1.950 & 3.810 \\
deepseek-math-7b-rl             & 3.407  & 2.374 & 2.542 \\
deepseek-moe-16b-base           & 0.926  & 2.244 & 1.535 \\
deepseek-moe-16b-chat           & 12.865 & 6.195 & 3.740 \\
deepseek-llm-7b-base            & 11.662 & 5.710 & 4.616 \\
\midrule
Max MAE                            & 12.865 & 6.195 & \textbf{4.616} \\
Average MAE                        & 4.216  & 2.657 & \textbf{2.090} \\
\bottomrule
\end{tabular}
\end{adjustbox}
\label{tab:hold_out}
\end{table}

\para{AIS and CIS improvement through refinement.}
We further investigate the mechanism behind our anchor refinement process by analyzing the distributions of both anchors and candidates. As can be observed from Fig.\ref{fig:refine_distribution}, both AIS and CIS improve significantly after refinement. AIS reflects the sensitivity of the selected anchors to prediction error, while CIS measures how strongly a candidate responds to the predictive behavior of the current proxy model. The consistent improvement of both metrics through refinement indicates a continuous enhancement in the quality of the anchor set.

{
\para{Hold-out Families.}
To further validate the robustness of our methods on unseen models, we select the DeepSeek family of models as hold-out families. These models span a wide range of scales (from 1.3B to 64B), architectures (both dense and MoE), and application situations (including coder, math, and general models). We train the models on evaluation data from other models and test it on these hold-out models. As shown in Table \ref{tab:hold_out}, \name{} outperforms other baselines in terms of the average and maximum prediction error, demonstrating strong generalization across different models.
}

\para{Error Analysis.}
We sampled a subset of test examples from both {SparseEval} and {gp-IRT}, and visualized their prediction errors, as demonstrated in Fig.~\ref{fig:error_analysis}. 
We observed that some models evaluated by {gp-IRT} exhibit errors close to 10\%, making it unsuitable for precise prediction tasks. 
In contrast, {SparseEval} shows consistently lower average and maximum errors compared to {gp-IRT}, demonstrating its generality and robustness.

\para{Weight Analysis.}  
We visualize the distribution of weights obtained from linear weight optimization, as shown in Fig.\ref{fig:weight_value}. The weight values are relatively concentrated and mostly close to zero. 
Interestingly, we observe that some weights take negative values that is impossible in traditional cluster-based methods, thereby significantly expanding the optimization space.




\para{Training efficiency.}
We further investigate the efficiency between \name{} and baselines. The inference cost is based on the number of anchors. After both methods generate anchors and weights, the inference time cost is roughly the same, so we focus mainly on the training time cost. In gp-IRT, the time is mostly spent on training the IRT model. This training process is very time-consuming, taking as long as 16 minutes for training on MMLU. For SparseEval, its time cost is much lower than that of gp-IRT, demonstrating the superior efficiency of our method.

\para{Model-model similarity.} 
Surprisingly, we also observe sparsity in the model-model similarity matrix, as shown in Fig.~\ref{fig:model_sim}. 
This suggests that the performance of unknown models can potentially be predicted from that of known models, which is consistent with the findings in TailorBench~\citep{yuan2025beyond}. 
We leave incorporating this sparsity structure into anchor weight optimization and anchor selection in the future work.


\begin{figure}[!t]
    \begin{minipage}{0.32\linewidth}
		\includegraphics[width=\linewidth]{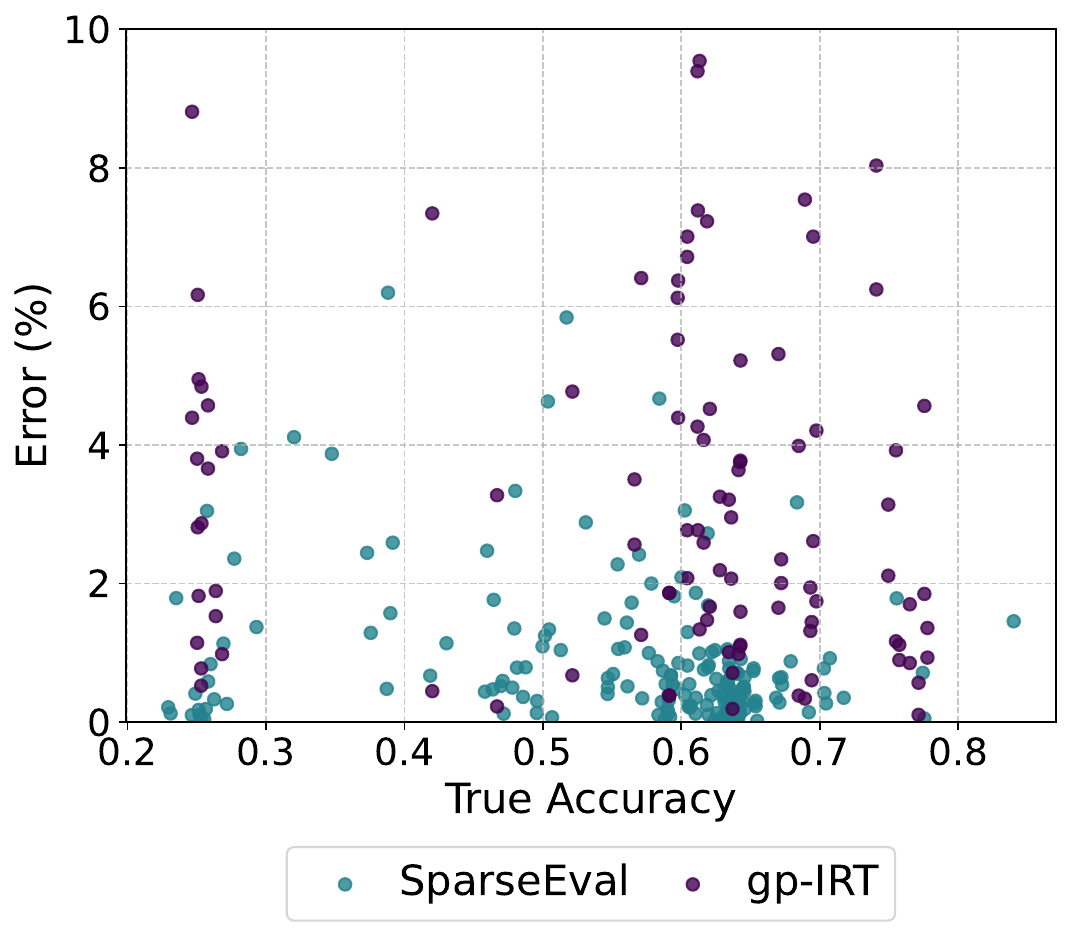}
		\caption{\textbf{Error Analysis on MMLU.} \name{} shows smaller max error than gp-IRT.}
        \label{fig:error_analysis}
	\end{minipage}
    \hfill
    \begin{minipage}{0.32\linewidth}
		\includegraphics[width=\linewidth]{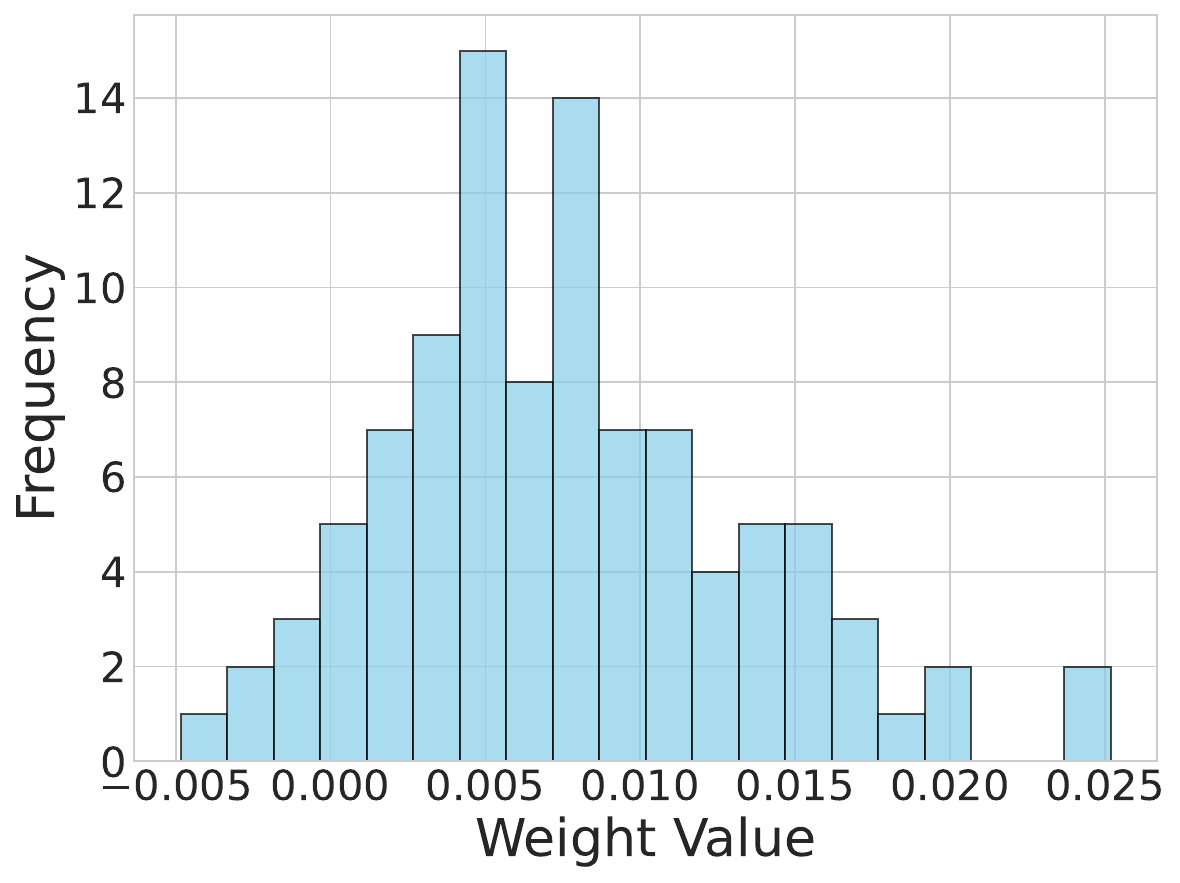}
		\caption{\textbf{Weight values on MMLU.} Negative value expand the optimization space.}
        \label{fig:weight_value}
	\end{minipage}
        \begin{minipage}{0.32\linewidth}
		\includegraphics[width=\linewidth]{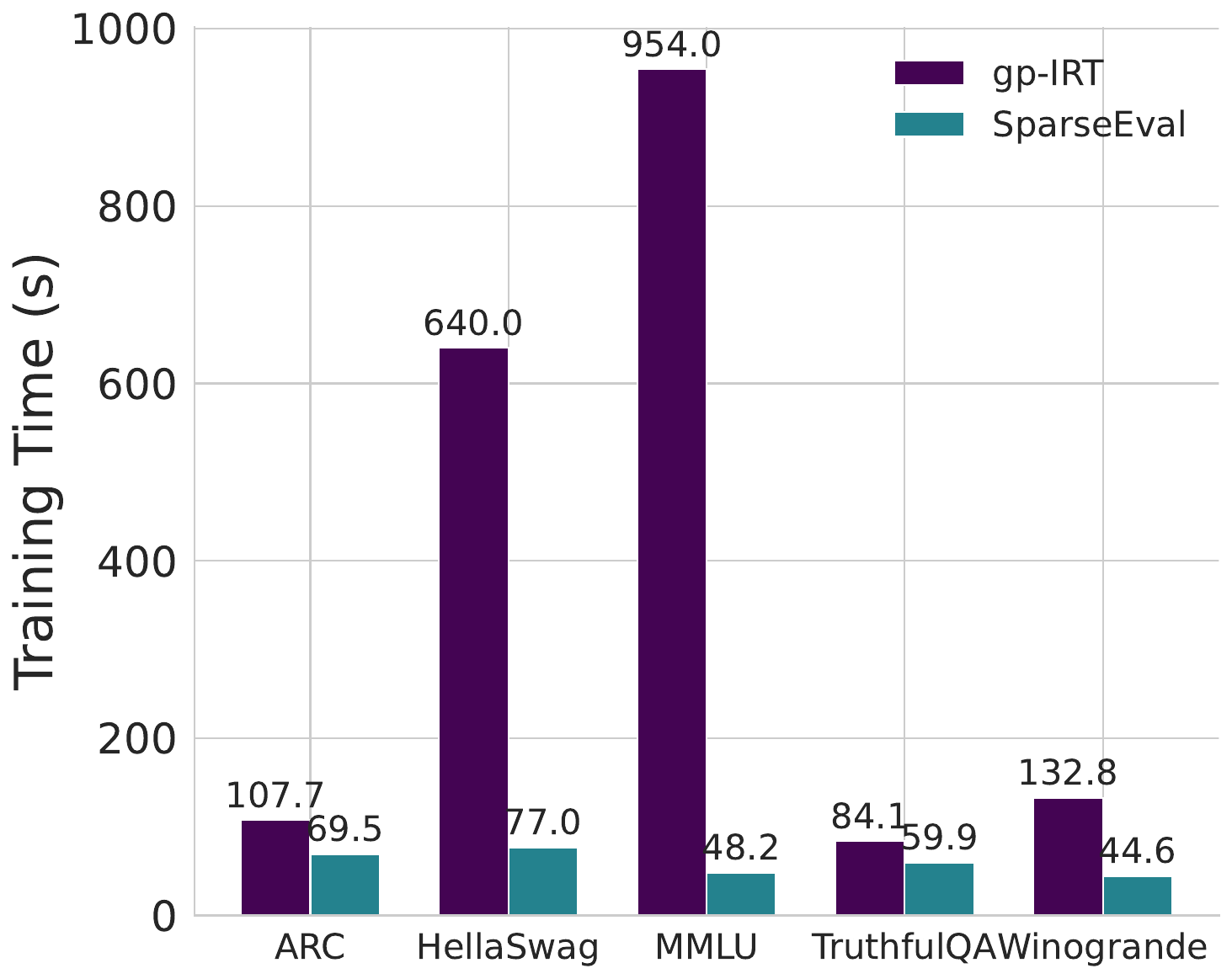}
		\caption{\textbf{Efficiency Analysis.} \name{} is more efficient than IRT-based method.}
	\end{minipage}
    \hfill
\end{figure}


\begin{figure}[!t]
    \begin{subfigure}{0.32\linewidth}
		\includegraphics[width=\linewidth]{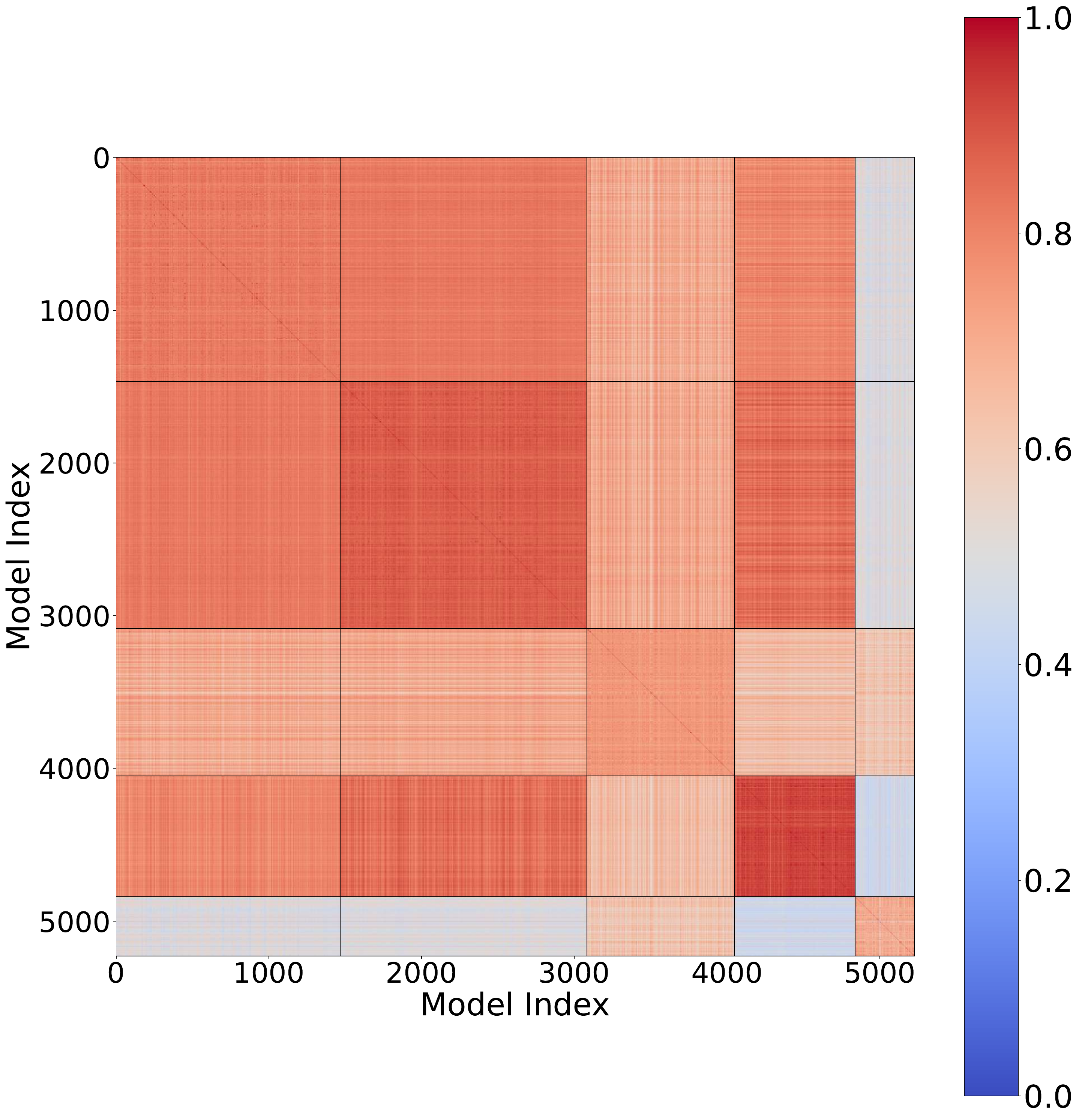}
		\caption{Arc}
	\end{subfigure}
    \hfill
    \begin{subfigure}{0.32\linewidth}
		\includegraphics[width=\linewidth]{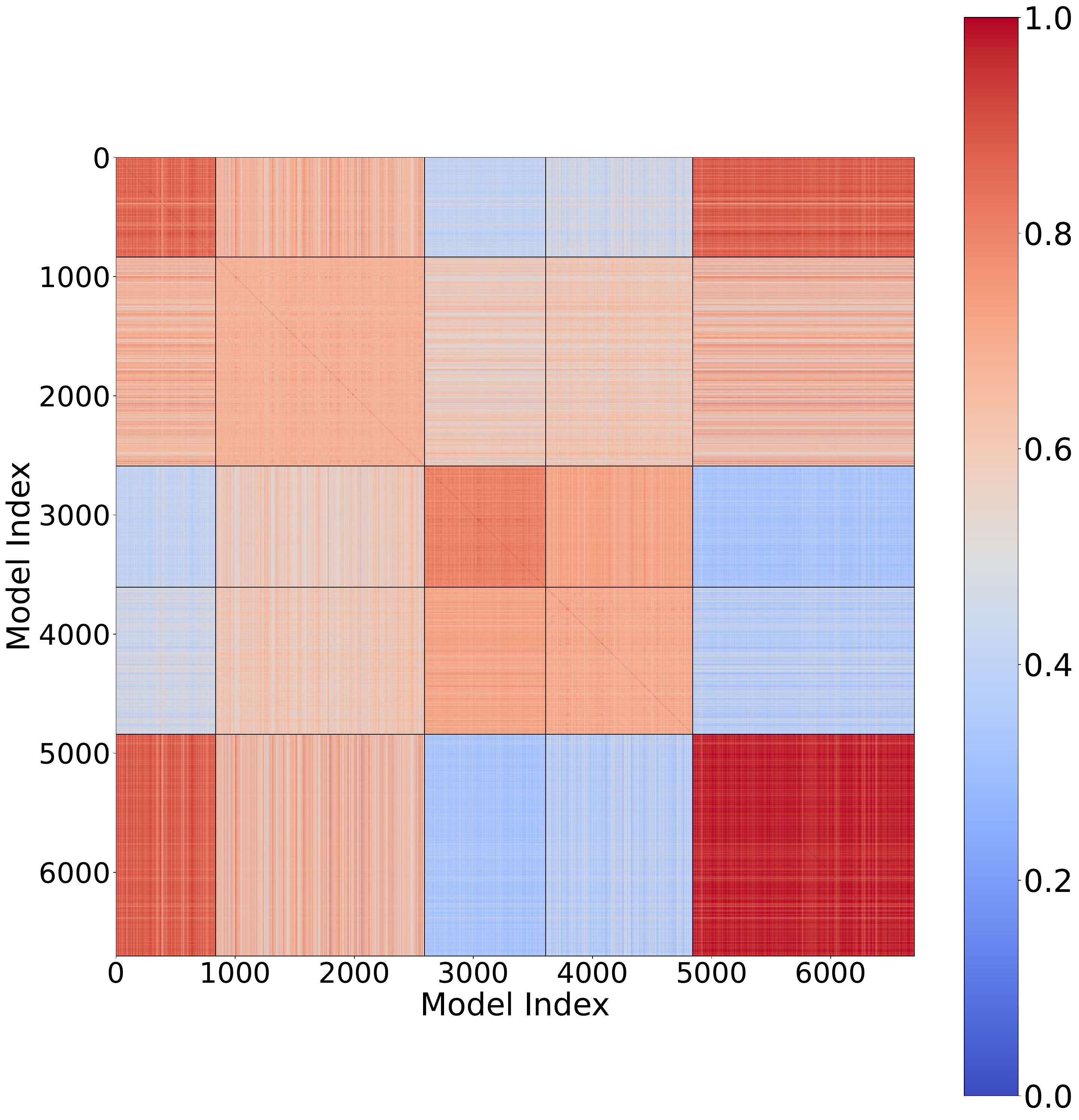}
		\caption{GSM8K}
	\end{subfigure}
    \begin{subfigure}{0.32\linewidth}
		\includegraphics[width=\linewidth]{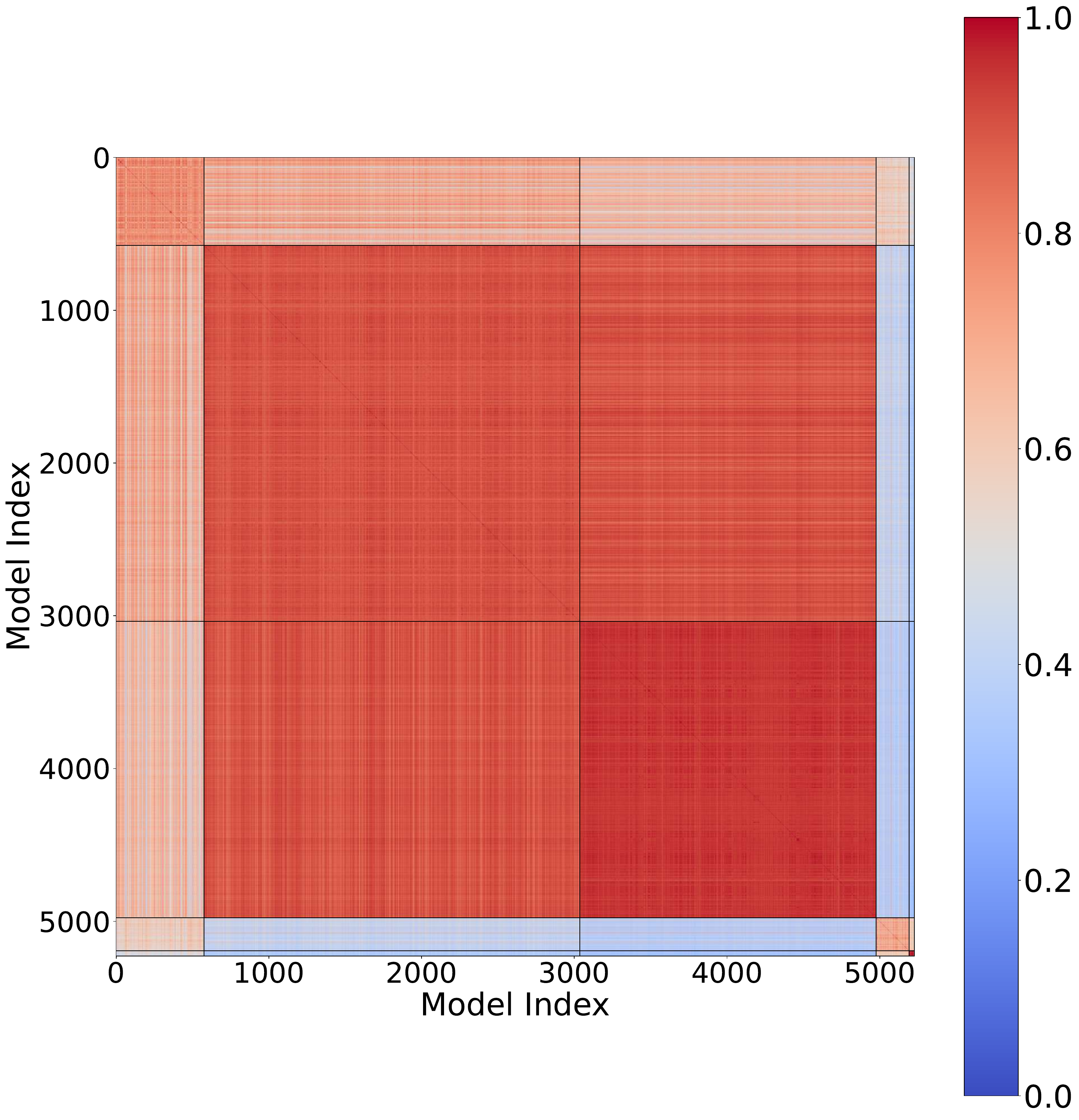}
		\caption{HellaSwag}
	\end{subfigure}
    \hfill
    \caption{\textbf{Sparsity in the model-model similarity matrix.} Efficient benchmark can benefit from the similarity and sparsity among models.}
    \label{fig:model_sim}
    \vspace{-1em}
\end{figure}

\section{Conclusions}
\label{sec:conslusions}
In this work, we introduce SparseEval, a novel framework that significantly reduces evaluation costs by selecting and weighting a small set of representative test items. We formalize sparse evaluation as a sparse optimization problem and propose the anchor refinement strategy for better anchor selection representativeness. Empirically, SparseEval outperforms baselines with more stable predictions and substantially lower training overhead across various LLM benchmarks. However, our approach has certain limitations since gradient descent requires sufficient samples to guarantee performance. We hope our work could bring inspiration for future work in the field of efficient evaluations.

\section*{Acknowledgements}
This work is supported in part by the National Natural Science Foundation of China, under Grant (62302309, 62571298).

\bibliographystyle{plainnat}
\bibliography{main}

\clearpage
\appendix
\section*{\Large{Appendix}}
\label{sec:Appendix}
\section{Dataset and License}
We collect the LLM evaluation results of more than 5000 models from Open LLM Leaderboard \cite{open-llm-leaderboard-v2}. 
Below are the datasets used in this paper that have known license information. 

The following datasets used in this paper are under the MIT licenses: GSM8K \cite{cobbe2021training} and MMLU \cite{hendrycks2020measuring}.

The following datasets used in this paper are under the CC BY 4.0 licenses: HellaSwag \cite{zellers2019hellaswag}.

The following datasets used in this paper are under the CC BY-SA 4.0 licenses: ARC \cite{clark2018think}. 

The following datasets used in this paper are under the  Apache-2.0 license:  TruthfulQA \cite{lin2021truthfulqa}, Winogrande \cite{sakaguchi2021winogrande}.

\section{LLM Usage}  
In the preparation of this manuscript, we made limited use of LLMs as a general-purpose writing assistant. Specifically, the LLMs are employed to polish wording, improve sentence fluency, and adjust grammatical structure for clarity and readability. At no point did the LLMs contribute to research ideation, the formulation of hypotheses, methodological design, execution of experiments, or interpretation of results. Their role was strictly confined to surface-level language refinement, comparable to the functions of a grammar-checking or style-editing tool. All intellectual contributions, including the conception of ideas, development of approaches, and analysis of findings, are entirely the work of the authors.

{
\section{Propositions Proofs}

\subsection*{Proof of Proposition \ref{prop:monotone_anchors}}
\begin{proof}
We begin with the notation defined above.
Let $S = [s_1,\dots,s_n] \in \mathbb{R}^{m\times n}$, and let
\[
\mu = S W_a = \frac{1}{n} S \mathbf{1}_n.
\]
For an anchor set $A \subseteq \{1,\dots,n\}$, define
\[
\mathcal{W}(A,k) = \{ w \in \mathbb{R}^n : \mathrm{supp}(w) \subseteq A,\ \|w\|_0 \le k\},
\]
and the optimal reconstruction error
\[
E(A) := \min_{w\in\mathcal{W}(A,k)} \|S w - \mu\|_1.
\]

If $A \subseteq B$, then every feasible weight vector supported on $A$ is also feasible on $B$.  
Thus
\[
\mathcal{W}(A,k)\subseteq \mathcal{W}(B,k).
\]
Since minimization is performed over a larger feasible set, we have
\[
E(B)
= \min_{w\in\mathcal{W}(B,k)} \|S w - \mu\|_1
\le \min_{w\in\mathcal{W}(A,k)} \|S w - \mu\|_1
= E(A).
\]
Therefore increasing the number of anchors never increases the optimal error.
\end{proof}

\subsection*{Proof of Proposition \ref{prop:refinement_decrease_linear}}
\begin{proof}
Since the weights are not at the $L_2$-optimal solution on $A$, the residual $r=S w-\mu$ is not orthogonal to $\mathrm{span}\{s_i:i\in A\}$, so
\(
s_i^\top r
\)
contains meaningful variation.  

For a feature $s_j$, the one-step decrease direction of the linear least-squares loss satisfies
\[
\frac{\partial\mathcal{L}}{\partial w_j} = 2 s_j^\top r,
\]
so the magnitude $|s_j^\top r|$ measures how much loss can be reduced by adjusting weight $w_j$.
Thus CIS identifies the most beneficial candidate to add, and AIS identifies the least beneficial anchor to keep.

Replacing $i^\star$ by $j^\star$ with
\(
|s_{j^\star}^\top r| > |s_{i^\star}^\top r|
\)
strictly enlarges the set of descent directions for linear least squares.
Hence the attainable minimum over the new support $A'$ cannot be larger:
\[
E(A') \le E(A).
\]

If additionally $s_{j^\star}\notin\mathrm{span}\{s_i:i\in A\}$, then the refinement introduces a new independent descent direction aligned with the residual, yielding a strictly smaller optimal loss:
\[
E(A') < E(A).
\]
\end{proof}

}

{
\section{Analysis of Representative Items }
In Figure~\ref{fig:weight_value}, we observed that certain items have weights with notably high magnitudes, suggesting that they are representative among all items. To better understand these anchors, we compute the average similarity of each item with all other items and highlight those assigned high-magnitude weights. 
As shown in Fig. \ref{fig:qualitative_analysis}, these items consistently exhibit high average similarity with other items. This suggests that using them as anchors allows for better representation of other items, which explains why they are assigned larger weights. 

Additionally, we also provide the details of these representative items. 
}

\begin{figure}[!t]
    \centering
    \begin{minipage}{0.8\linewidth}
		\includegraphics[width=\linewidth]{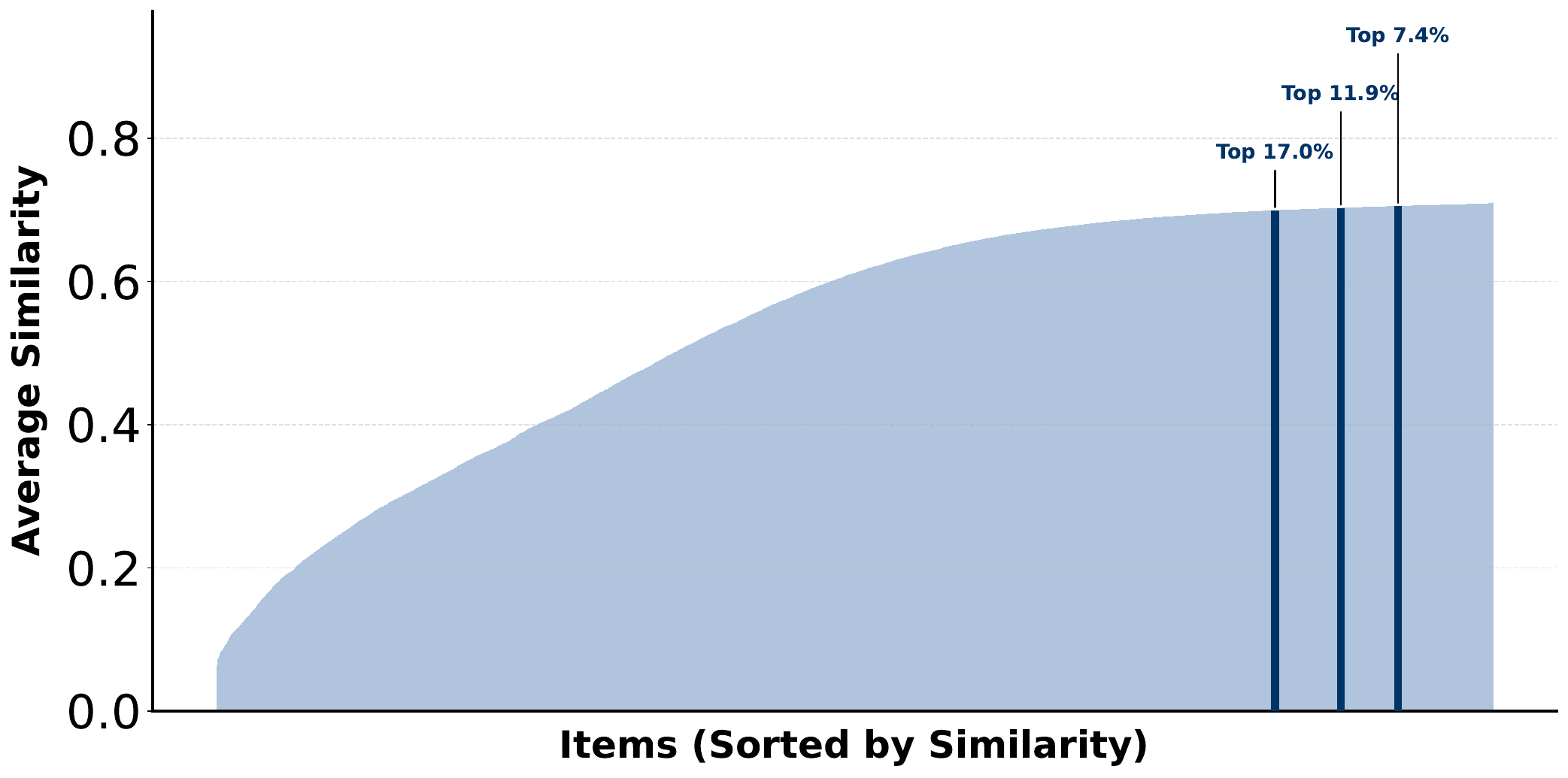}
		\caption{\textbf{Representative items analysis on MMLU.} Representative items tends to share high similarity with other items.}
        \label{fig:qualitative_analysis}
	\end{minipage}
\end{figure}

\clearpage
\begin{tcolorbox}[
colback=white!10!white,
colframe=black!50!white,
title=Representative Items Samples on MMLU,
breakable]
\small
The following are multiple choice questions (with answers) about professional law.
\\
...
\\
A man is on trial for rape. The alleged victim testified that she went out to dinner with the man. Afterward, he invited her to his apartment for coffee. Upon entering the apartment, he violently assaulted her. Although she tried to resist, he overpowered and raped her. The man testified that during dinner, he and the victim drank two bottles of Champagne. When they returned to his apartment, he was so intoxicated that he honestly believed that she consented to the intercourse. The jury determined that the victim did not consent to the intercourse. The jury also found that the man, as a result of his intoxication, honestly but unreasonably believed that she was consenting. As a consequence, the defendant should be found

A. not guilty, because he honestly believed that the victim consented.

B. not guilty, because his intoxication negated his criminal intent.

C. guilty, because rape is a general intent crime.

D. guilty, because she did not consent, and his belief that she was consenting was unreasonable.

Answer:

\tcbline

The following are multiple choice questions (with answers) about clinical knowledge.
\\
...
\\

Which of the following is true about involuntary movements in the arm?

A. Alcohol makes the tremor of benign essential tremor worse

B. Hemiballismus is due to a stroke causing paralysis of the distal half of the arm

C. A 'milkmaid' grip is sometimes found in dystonia

D. Writer's cramp is an example of a focal dystonia

Answer:

\tcbline

The following are multiple choice questions (with answers) about high school physics.
\\
...
\\
During an isothermal expansion, a confined ideal gas does 150 J of work against its surroundings. Which of the following describes the heat transfer during this process?

A. 150 J of heat was added to the gas.

B. 150 J of heat was removed from the gas.

C. 300 J of heat was added to the gas.

D. 300 J of heat was removed from the gas.

Answer:

\label{fig:anchor_sample}
\end{tcolorbox}
\end{document}